\newcites{app}{Appendix References}
\newcommand{\argmax}{\operatornamewithlimits{argmax}}
\newcommand{\argmin}{\operatornamewithlimits{argmin}}
\renewcommand{\Re}{{\mathbb{R}}}
\renewcommand{\Pr}{{\mathbb{P}}}
\newcommand{\Ex}{{\mathbb{E}}}
\newcommand{\Cov}{\mathrm{Cov}}
\newcommand{\tr}{{\textup{trace}}}
\newcommand{\tA}{{\widetilde A}}
\newcommand{\tB}{{\widetilde B}}
\newcommand{\PM}{{\mathcal{P}}}
\newcommand{\DSM}{{\mathcal{D}}}
\newcommand{\AM}{{\mathcal{A}}}
\newcommand{\GM}{{\mathcal{M}}}
\newcommand{\1}{{\mathbf{J}}}
\newcommand{\0}{{\mathbf{0}}}
\newcommand{\ErdosRenyi}{{Erd{\H o}s-R{\'e}nyi}}
\newcommand{\CER}{{\mathrm{CorrER}}}
\newtheorem{theorem}{Theorem}
\newtheorem{proposition}[theorem]{Proposition}
\theoremstyle{definition}
\newtheorem{definition}[theorem]{Definition}
\theoremstyle{remark}
\newtheorem{remark}[theorem]{Remark}
\begin{document}

\title{Matched Filters for Noisy Induced Subgraph Detection}

\author[$\dag$]{Daniel~L.~Sussman}
\author[$\ast$]{Youngser~Park}
\author[$\ast$]{Carey~E.~Priebe}
\author[$\ddag$]{Vince~Lyzinski}

\affil[$\dag$]{\small Department of Mathematics and Statistics, Boston University}
\affil[$\ddag$]{\small Department of Mathematics, University of Maryland College Park}
\affil[$\ast$]{\small Department of Applied Mathematics and Statistics, Johns Hopkins University}

\maketitle

\begin{abstract}
The problem of finding the vertex correspondence between two noisy graphs with different number of vertices where the smaller graph is still large has many applications in social networks, neuroscience, and computer vision.
We propose a solution to this problem via a graph matching matched filter: centering and padding the smaller adjacency matrix and applying graph matching methods to align it to the larger network.
The centering and padding schemes can be incorporated into any algorithm that matches using adjacency matrices.
Under a statistical model for correlated pairs of graphs, which yields a noisy copy of the small graph within the larger graph, the resulting optimization problem can be guaranteed to recover the true vertex correspondence between the networks.
 However, there are currently no efficient algorithms for solving this problem.
To illustrate the possibilities and challenges of such problems, we use an algorithm that can exploit a partially known correspondence and show via varied simulations and applications to {\it Drosophila} and human connectomes that this approach can achieve good performance.
\\
{\bf Keywords}:
multiple graph inference, subgraph detection, graph matching
\end{abstract}

\maketitle

\section{Introduction}

In many settings, we often want to quantify how multiple networks relate to each other in order to study how actors jointly use these networks.
This may arise from multiple modalities, such as communications networks, delivery networks, financial networks, and social networks, or from a time dynamic setting.
Similarly, in neuroscience or biology we may seek to compare brain networks or protein networks of different individuals or species.
In computer vision, we may seek to determine the relationships between the shapes and objects in two images as represented by graphs.
Often, these networks are on different unmatched sets of vertices that are not the same cardinality.
This limits the set of available tools as the adjacency matrices and other graph statistics may not be directly comparable.

In a related fashion, detection and location of particular induced subgraphs that correspond to a given activity or structure of interest are also paramount problems in various domains.
Examples include determining whether a particular activity pattern is present in a social network, detecting certain shapes in an image, or discovering motifs in brain networks.

When the induced subgraphs are not too large, with up to $\approx$10--20 vertices, there are a number of approaches including tree search approaches~\cite{Carletti2018-ea} and backtracking algorithms~\cite{Kuramochi2001-ll}.
Parallel algorithms have also been proposed to approximately enumerate the number of isomorphic copies of a subgraph appearing in a large network~\cite{Slota2013-zg}.
However, it may be the case that these subgraphs contain $\gg20$ vertices. 
In this case, many existing approaches will either fail to find the subgraphs of interest or be computationally intractable.
While state space representations~\cite{Cordella2004-nc} have been used to identify these larger subgraphs, these also have limitations in terms of the applicable graph structure.
As a simple example of the challenge of this problem, consider finding or enumerating cliques.
Finding cliques and estimating the number of cliques of small sizes can be achieved via fast algorithms~\cite{Schmidt2009-nf}, but the problem of finding the maximum clique is a notoriously challenging problem~\cite{Bomze1999-bx}.

Furthermore, while even the subgraph isomorphism is itself NP-complete~\cite{Ullmann1976-ir}, we often expect the subgraphs may not appear exactly in the graph but rather only approximately.
This may be due to errors and missingness in the observation of one or both graphs, so finding an exact subgraph might not be possible.

We view both of these as different points on a continuum of related problems that are all examples of graph matching~\cite{ConteReview,foggia2014graph}. 
In a simpler setting, when the two graphs adjacency matrices $A,B\in\{0,1\}^{n\times n}$ are of equal size, the graph matching problem (GMP) seeks the permutation matrix 
\begin{align}
\label{eq:GM}
  \argmin_{P\in \PM} \|A-PBP^T\| = \argmax_{P\in \PM} \tr(APBP^T),
\end{align}
where $\PM$ is the set of $n\times n$ permutation matrices. 
While this problem is NP-hard in general, there are numerous approaches in the literature that attempt to approximately solve the GMP. 
\cite{ConteReview} and \cite{Emmert-Streib2016-st} provide reviews of the prescient literature.
In particular, when prior knowledge about the correspondence between vertices the  can be incorporated into the algorithms, the GMP can be approximately solved efficiently for graphs with more than $10^5$ vertices~\cite{lyzinski_spectral,yartseva2013performance} without the need for sophisticated modern parallel computing to be brought to bear.

A more challenging problem that we will not consider is to detect anomalous subgraphs within a collection of graphs~\cite{Akoglu2015-xe}.
In the anomaly detection setting, the structure of the anomalous subgraph may be only known up to certain graph characteristics or deviations from the structure of the remaining graph.
In our setting, the approximate topology of the graph to be detected is known explicitly.

To approximately solve the noisy induced subgraph problem, we will use the machinery of graph matching to construct a {\em graph matching matched filter}, aligning the smaller network to its closest subgraph within the larger network.
Our main goals in this manuscript are to investigate the theoretical limitations of this noisy graph matching framework when the graphs may be of very different sizes.
To match graphs of radically different sizes in Eq.~\eqref{eq:GM}, we consider padding the smaller matrix to render graph matching an appropriate tool for the problem~\cite{FAP}.
Under a statistical model for noisily implanting an induced subgraph into a larger network, we show that the true induced subgraph will be found by brute force GM algorithm using an appropriate padding scheme, provided graph correlations and probabilities satisfy certain parameter assumptions.

We further demonstrate the effectiveness of these strategies when the vertex correspondence is partially known.
By relaxing the graph matching problem constraints to the set of doubly stochastic matrices, a gradient descent approach can be put to use~\cite{FAQ}.
The problem is non-convex~\cite{Lyzinski2016-kp} and hence the initialization can incorporate the partial correspondence and random restarts to seek multiple local minimums.
By choosing the optimal of these local minima, an exact or approximate recovery of the vertex correspondence can often be found.

\section{Background for Graph Matching}

In this section we provide a brief background on graph matching, some methods to incorporate prior information, and a statistical model for correlated graphs.
Throughout the remainder of this article we will use the following notation.

Let $[n]=\{1,2,\dotsc,n\}$.
Let $\PM_n$ and $\DSM_n$ denote the set of $n\times n$ permutation matrices and doubly stochastic matrices, respectively.
Let $\1_n$ and $\0_n$ denote the $n\times n$ all-ones and all-zeros matrices, respectively, and $\1_{nm}$ and $\0_{nm}$ for $n\times m$ rectangular versions.
Let $\AM_n$ denote the set of adjacency matrices corresponding to simple undirected graphs, i.e.\ the set of symmetric hollow $\{0,1\}$-valued $n\times n$ matrices.
When clear from context, we may omit subscripts indicating the sizes of these matrices.
Finally, let $\oplus$ denote the direct sum of two matrices, $M\oplus M'= \begin{pmatrix}  M & \0 \\ \0 & M' \end{pmatrix}$.

\subsection[Contrasts with Subgraph Isomorphism]{Contrasts with Subgraph Isomorphism}

Before describing our noisy matched filter and related procedures, we would like to distinguish our approach and goals from that of exact subgraph isomorphism.
The problem of exact subgraph isomorphism is to find an induced subgraph of the larger graph which is isomorphic to the smaller graph.
Frequently, the goal is to find all isomorphic copies of the smaller graph within the larger graph.
Examples of this could be counting triangle, $k$-cliques, and $k$-cycles, some of which have fast algorithms associated with them while in general the problem is NP-complete~\cite{Ullmann1976-ir}.

The subgraph isomorphism problem is an area of active research~\cite{Samsi2017-if, especially in areas such as computer vision and pattern recognition.
Recent algorithm developments include the VF3 algorithm~\cite{Carletti2018-ea}, which builds on the VF2 algorithm~\cite{Cordella2004-ij}, and both employ a tree-search approach.
Other popular and recent algorithms include the LAD algorithm~\cite{treesearch1} and the RI algorithm~\cite{Bonnici2013-kn}.}

The problem of noisy matched filters or inexact subgraph isomorphism differs from exact isomorphism in that only the ``closest'' induced subgraph is sought after.
Indeed, for most of the problems we consider below, an isomorphic copy of the subgraph is likely to not even exist in the larger graph (except for our correlation $\rho = 1$ settings).
For this reason, exact subgraph isomorphism approaches cannot succeed.
Research for inexact subgraph isomorphism is even more nascent with only a few recent approaches having been developed~\cite{Caelli2004-nd,Dutta2017-qg,Carletti2016-yn}, each of which has a distinct approach and objective including spectral and higher-order contextual approaches.

A second differing emphasis of this work is that the overall goal is to understand when it is possible to find a specific set of vertices within the larger graph that correspond to the vertices in the small graph.
When many isomorphic copies of the small graph are present in the large graph, it will be impossible to distinguish which, if any, of these copies is the ``true'' vertex-correspondence.}
 
As such, since our goals are different from those of exact subgraph isomorphism in two ways, comparisons to exact subgraph isomorphism algorithms are not made.
Additionally, our theoretical results are algorithm-independent, applying to any approach which attempts to optimize the given objective functions.

\subsection[Graph Matching Algorithms]{Graph Matching Algorithms}

Solving the GMP problem exactly is NP-complete in general, but there are a number of approaches which provide good approximate solutions~\cite{ConteReview,Emmert-Streib2016-st}.
Some approaches rely on tree-based methods for exact branch and bound algorithms for integer programs~\cite{Ghahraman1980-pg}.
While these approaches are guaranteed to find a global minimum, their computational scaling is very poor.
For larger graphs, the constraints for the GMP are often relaxed so that continuous optimization machinery can be deployed~\cite{FAQ,zaslavskiy2009path,jovo}.
The relaxed solutions are then projected back onto $\PM$ yielding an approximate solution to the graph matching problem.
Some relaxations involve applying spectral methods which enables the application of fast linear algebra techniques~\cite{Egozi2013-jh}.
A convex approach has also been proposed but requires the graph to have certain properties~\cite{alex}, which do not frequently hold in practice.

For the computational experiments in this manuscript we will rely on a principled indefinite relaxation of the GMP constraints~\cite{Lyzinski2016-kp} to the set of doubly stochastic matrices $\DSM{}$, the convex hull of $\PM$.
Details of this algorithm are discussed in Section~\ref{sec:computation}.
This approach has been shown to perform very well in practice on a number of benchmark problems, especially when exploiting multiple random restarts to avoid poor performing local minima~\cite{FAQ}.
We expect that many of the phenomena observed in our experiments would appear in similar fashions for other algorithms.

Frequently, these approaches can exploit \emph{seeded vertices}, i.e., a partial list of known correspondences between the two vertex sets.
When sufficiently many seeds are present, these algorithms, which often have few guarantees without seeds, can be solved efficiently and the resulting match can be guaranteed to be correct~\cite{FAP,JMLR:v15:lyzinski14a} asymptotically almost surely for relatively general random graph models.
While the theory we discuss below does not require seeds, our algorithms will use seeds and the algorithmic performance relies heavily on a sufficient number of a priori available seeds.

In this work, we employ a soft seeding approach~\cite{Fang2018-er} which uses the prior information to initialize the algorithm but does not insist that the correspondence for seeds be fixed; i.e., if the seeded vertices are $[m]$ in both networks, we initialize the gradient descent graph matching algorithm at $I_m\oplus D$ for suitable $D$ in $\mathcal{D}_{n-m}$. 
Conversely, in the hard-seeded setting we optimize over $\Pi_{n-m}$ with the global solution being then of the form $I_m\oplus P$ for $P\in\Pi_{n-m}.$ 
This allows for the possibility that seeds may be noisy, allowing for the algorithm to potentially correct for incorrect initial seeds, a scenario that we do not explore further here.

\subsection{Statistical Models}

In order to study the applicability and limitations of a graph matching approach for subgraph detections, we analyze the problem from a statistical viewpoint in the setting of the correlated heterogeneous \ErdosRenyi{} model~\cite{Lyzinski2016-kp}.
The algorithmic approaches we use in our empirical investigations are only partially inspired by this model.
The model enforces that the objective functions considered are not unreasonable but, except for the oracle approach, the objective functions are based primarily on topological considerations.

The following definition provides a distribution for pairs of random graphs with different numbers of vertices where there is a positive correlation between corresponding vertex-pairs.
In particular, the marginal probabilities of edges in each graph can vary and the presence of an edge in one graph increases the conditional probability of the corresponding edge in the second graph.
In a social network setting, this can be interpreted as two people who are related in one graph are more likely to be related in the other graph.
Without loss of generality, we assume that the smaller graph (of order $n_c$) is denoted $A$ and that the corresponding vertices to $A$ in $B$ are the first $n_c \leq n$ vertices.

\begin{definition}[Correlated \ErdosRenyi{}]
Suppose $\Lambda \in [0,1]^{n\times n}$ and $R\in [0,1]^{n_c\times n_c}$ for $0<n_c\leq n$.
Denote by $\Lambda^{c}$, the order $n_c$ principal submatrix of $\Lambda$.
A pair of adjacency matrices $(A,B)\sim \CER(\Lambda, R)$ if $A\in \AM{}_{n_c}$, $B\in \AM{}_n$, and for each $u<v$, $B_{uv}$ are independent with $B_{uv}\sim \mathrm{Bernoulli}(\Lambda_{uv})$ and for each $u<v\leq n_c$, $A_{uv}$ are independent with $A_{uv}\sim \mathrm{Bernoulli}(\Lambda_{uv})$.
Additionally, for each $u,v, u',v'\in[n_c]$ with $u<v$, $u'<v'$, $u\neq u'$, $v\neq v'$ , $B_{u,v}$ and $A_{u',v'}$ are independent except that $\mathrm{corr}(A_{uv}, B_{uv}) = R_{uv}$.
\end{definition}

When $n_c=n$, it can be shown that the solution to the GMP will asymptotically almost surely yield the correct vertex correspondence under mild assumptions on $\Lambda$ and $R$, i.e., the unique element in the argmin in Eq.~\eqref{eq:GM} is the identity matrix $I$~\cite{JMLR:v15:lyzinski14a,Lyzinski2016-kp}.

Note that other authors have considered equivalent reparameterized versions of the model above.
In particular, a common viewpoint is that each observed graph is a random subgraph of some common underlying graph, which also arises from independent edge sampling~\cite{yartseva2013performance}.
By varying the parameters for the underlying graph and the parameters for the random subgraph, any correlated \ErdosRenyi{} can be represented in this way, provided all correlations are non-negative.
The reverse is also true, and more distributions can be constructed by allowing for negative correlations and non-identical distributions.
Another closely related possibility is that one graph is a noisy observation of an induced subgraph from the other graph, where there may be extra or missing edges~\cite{Lyzinski2016-kp}.
This can also be related to the correlated \ErdosRenyi{} by an appropriate reparameterization.

The marginal distribution of each graph is an independent edge graph which is closely related to ideas of exchangeable random graphs and latent position models which have been used extensively in various application areas~\cite{Hoff2009-pj,Hoff2002-lj}.
Any random graph, including exchangeable random graphs, can be viewed as a mixture of independent edge random graphs~\cite{Durante2016-tk}. 
Viewed from this perspective, much of the theory we consider below can be extended to such mixtures presuming that the assumptions hold with sufficiently high probability across mixture components.

\section{Padding Approaches}

In order to match pairs of graphs with differing numbers of vertices, we propose to pad the smaller graph/matrix with enough rows and columns to match the order of the larger matrix.
We will consider a trio of padding schemes which will result in differing qualities for the resulting match~\cite{FAP}:
{\em Naive Padding} Let $\tA=A\oplus \0_{n_j}$ and match $\tA$ and $B$;
{\em Centered Padding} Let $\tA=(2A-\1)\oplus \0_{n_j}$, let $\tB=2B-\1$, and match $\tA$ and $\tB$;
{\em Oracle Padding} Let $\tA=(A-\Lambda_{n_c})\oplus \0$, let $\tB=B-\Lambda$, and match $\tA$ and $\tB$.

As we will see in the next section, the naive padding scheme---which finds the best fitting subgraph of $B$ to match with $A$---will not be guaranteed to find the true correspondence between nodes.
In contrast, the centered padding scheme---which finds the best fitting induced subgraph of $B$ to match with $A$---and the oracle padding scheme are guaranteed to succeed under mild model conditions, even in the presence of an exponentially small (in terms of the size of $B$) subgraph $A$.
In general, the oracle padding scheme will be inaccessible as $\Lambda$ is unknown, but using various methods to estimate $\Lambda$~\cite{chatterjee2014matrix,Davenport2014-tf}, we can approximate $\Lambda$ in ways that can improve matching performance.
Importantly, the padding and centering schemes are not tied to the algorithms described Section~\ref{sec:computation}, and can be used with any graph matching approach which uses the adjacency matrices.

Note that each of these padding approaches corresponds to a particular graph detection problem.
With the naive padding scheme, the optimization program is equivalent to the problem $\argmax_{\sigma:[n_c]\rightarrowtail [n]} \sum_{u,v} A_{\sigma(u)\sigma(v)}B_{uv}$, where $\rightarrowtail$ denotes that $\sigma$ is injective.
Hence, the global optimum reveals the largest common subgraph of $B$ and $A$, as only edge agreements are rewarded whereas edge disagreements are not penalized. 
Note that if $B$ has a large clique, then the naive approach will tend to match $A$ to this large clique as all edges in $A$ are present in the large clique.
As we will see below, the naive padding approach can be guaranteed with high probability to yield an incorrect vertex correspondence under only mild assumptions on $R$ and $\Lambda$.

If the graphs have the same number of nodes, both the centered padding scheme and the naive padding scheme are equivalent~\cite{Bunke1997-bk}, while they are not equivalent for graphs of different orders.
With the centered padding scheme, the optimization program is equivalent to the problem $$\argmin_{\sigma:[n_c]\rightarrowtail [n]} \sum_{u,v} \vert A_{\sigma(u)\sigma(v)} - B_{uv} \vert.$$
In this case, the global optimum reveals the induced subgraph of $B$ which is nearest to $A$ in terms of edge-edit distance~\cite{Bunke1997-bk}.
As will be seen below, the optimum to this program can be guaranteed to correspond the true vertex correspondence provided that the graphs are not too sparse and the correlation is sufficiently large.

The oracle padding scheme corresponds to the problem, 
$$\argmax_{\sigma:[n_c]\rightarrowtail [n]} \sum_{u,v} \widehat{\Cov}(A_{\sigma(u)\sigma(v)} B_{uv}),$$ where $\widehat{\Cov}$ denotes the empirical covariance when both means are known, $\widehat{\Cov}(A_{ij},B_{uv}) = (A_{ij}-\Lambda_{ij}) (B_{uv} - \Lambda_{uv})$.
As this approach eliminates any undue rewards from matching likely edges to even more likely edges or unlikely edges to even more unlikely edges, the global optimum is guaranteed with high probability to be the true correspondence even under mild assumptions on $\Lambda$ and $R$.
Indeed, even if $A$ and $B$ were non-identically distributed, with means $\Lambda_A$ and $\Lambda_B$ respectively, by matching $A-\Lambda_A$ and $B-\Lambda_B$, the theoretical results will still hold~\cite{lyzinski2017graph}.

While the theory below shows that under the correlated heterogeneous \ErdosRenyi{} model the oracle padding yields stronger guarantees than the centered padding method and the naive padding method can have very bad performance, this will not necessarily translate in practice.
This can occur especially due to the fact that we do not find the true global optimum of the GMP but rather a local optimum near our initialization.
Our simulations and real data examples illustrate how centering or naive may be preferable in certain scenarios, and biased estimates of the oracle may also improve performance.
In practice, one may try different paddings and even use the solution of one padding approach to initialize the optimization for another padding approach.
See Section~\ref{sec:experiments} for examples where various approaches, including matching for a second time, may provide more or less ideal performance.

\subsection{Theory}

For each of the padding scenarios, we will consider $n_c$ and $n$ as tending to $\infty$ in order to understand the ability of these optimization programs to detect progressively larger subgraphs.
Note that we will require that the number of vertices $n_c$ is growing with $n$.
Indeed, if $n_c$ is fixed and $n$ grows then eventually every subgraph of size $n_c$ will likely appear multiple times as an induced subgraph in $B$ just by chance~\cite{Lovasz2012-df}.

For each padding scenario let $P^*\in \{0,1\}^{n_c\times n_c}$ denote the order $n_c$ principal submatrix of the solution to the corresponding graph matching problem.
The proofs of each theorem below can be found in Appendix~\ref{app:proof}

The first theorem is a negative result that shows that under weak assumptions on $\Lambda^c$, one can construct a $\Lambda$ under which the naive padding scheme is almost surely guaranteed to not detect the errorful version of $A$ in $B$.
Indeed, it can be guaranteed to recover zero true vertex correspondences with high probability.

\begin{theorem}\label{thm:naive}
Suppose $R\in [0,1]^{n_c\times n_c}$ satisfies $R<\rho$ entry-wise for some scalar $\rho\in(0,1)$.
Suppose $n_c < n/2$, $\beta \in (0, 1)$, and entry-wise $\Lambda^c \leq\beta$ and $\Lambda^c = \omega(n_c^{-1}\log n_c)$.
Then there exists $\Lambda$ such that for $A,B \sim \CER(\Lambda, R)$, using the naive padding scheme, 
\begin{equation}
\label{eq:badperm}
\Pr[P^* \neq I] \geq 1 - \exp\left\{ -C \epsilon^2 (\log n_c)^2 \right\},
\end{equation}
for some universal constant $C>0$ and $\epsilon\in(0, 1-\beta-(1-\beta)\rho)$.
\end{theorem}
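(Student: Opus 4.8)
The plan is to construct $\Lambda$ by planting a large, dense ``decoy'' clique-like region inside $B$ that is disjoint from the true copy of $A$, and then show that the naive objective $\sum_{u,v} A_{\sigma(u)\sigma(v)} B_{uv}$ prefers mapping $A$ entirely into this decoy region over the identity map. Concretely, I would take $\Lambda$ to agree with the given $\Lambda^c$ on the first $n_c$ vertices (so the marginal law of $A$ and of $B$'s induced copy is as prescribed), and on a separate block of $n_c$ vertices among the last $n-n_c$ (which exist since $n_c<n/2$) set $\Lambda$ equal to some value $q$ close to $1$, say $q \ge \beta + (1-\beta)\rho + \epsilon'$ for a suitable $\epsilon'$; the remaining entries of $\Lambda$ can be set arbitrarily, e.g.\ to match $\Lambda^c$ or to $0$.

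The core of the argument is a comparison of two quantities. First, the value of the naive objective at the identity, $T_{\id} = \sum_{u<v} A_{uv} B_{uv}$, which by the $\CER$ model has expectation $\sum_{u<v}\big(\Lambda^c_{uv}{}^2 + R_{uv}\Lambda^c_{uv}(1-\Lambda^c_{uv})\big) \le \sum_{u<v} \Lambda^c_{uv}(\Lambda^c_{uv} + \rho(1-\Lambda^c_{uv})) \le \sum_{u<v}\Lambda^c_{uv}(\beta + \rho(1-\beta))$. Second, the value at the permutation $\sigma_0$ that maps $[n_c]$ onto the decoy block: here the objective is $\sum_{u<v} A_{\sigma_0(u)\sigma_0(v)} B_{uv}$ where the $B$-entries are independent $\mathrm{Bernoulli}(q)$ and independent of $A$ (the correlation in $R$ only couples $A_{uv}$ with $B_{uv}$, not with $B_{\sigma_0(u)\sigma_0(v)}$), so this has expectation $q\sum_{u<v}\Lambda^c_{uv}$. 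Choosing $q$ as above makes the expected gap at least $\epsilon \sum_{u<v}\Lambda^c_{uv}$, and the hypothesis $\Lambda^c = \omega(n_c^{-1}\log n_c)$ guarantees $\sum_{u<v}\Lambda^c_{uv} = \omega(n_c\log n_c)$, so the expected separation is $\omega(n_c\log n_c)$. I would then invoke a Hoeffding/Bernstein concentration bound (the two sums are each sums of independent bounded — in fact $\{0,1\}$ or products of independent $\{0,1\}$ — terms, or can be written as Lipschitz functions of the independent edge indicators so that McDiarmid applies) to show each of $T_{\id}$ and $T_{\sigma_0}$ deviates from its mean by more than $\epsilon\sum\Lambda^c_{uv}/3$ only with probability $\exp(-c\,\epsilon^2(\sum\Lambda^c_{uv})^2/(\text{number of terms}))$; since $\sum\Lambda^c_{uv} = \omega(n_c\log n_c)$ and there are $O(n_c^2)$ terms, this exponent is $-\Omega(\epsilon^2(\log n_c)^2)$. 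On the complement of these bad events, $T_{\sigma_0} > T_{\id}$, so $\sigma_0 \ne \id$ is strictly better and hence $P^* \ne I$. For the strengthened claim that zero true correspondences are recovered one argues that, conditionally, the optimal $\sigma$ must place all of $[n_c]$ in the decoy block (any partial map mixing decoy and true-copy vertices does strictly worse than the pure decoy map, again by an expectation-plus-concentration comparison over the relevant edge sets), and the decoy block is disjoint from $[n_c]$.

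The main obstacle I anticipate is the concentration step, and specifically getting the exponent to come out as $(\log n_c)^2$ rather than something like $n_c\log n_c$. The naive trick of bounding each objective by a sum of $O(n_c^2)$ independent terms and applying Hoeffding gives a deviation probability $\exp(-\Omega(t^2/n_c^2))$ for deviation $t$; with $t = \Theta(\epsilon\sum\Lambda^c_{uv}) = \Theta(\epsilon n_c \log n_c)$ in the worst case $\Lambda^c = \Theta(n_c^{-1}\log n_c)$ this indeed yields $\exp(-\Omega(\epsilon^2(\log n_c)^2))$, so the bound is tight exactly at the threshold of the hypothesis — one must be careful that the $\omega(\cdot)$ in the hypothesis is used only to ensure the means separate, and that the density-dependent variance is handled by a Bernstein bound (variance proxy $\sum\Lambda^c_{uv}$ rather than $n_c^2$) so the argument does not quietly require $\Lambda^c$ bounded away from $0$. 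I would also need to double-check that optimizing over \emph{all} injections $\sigma:[n_c]\rightarrowtail[n]$ (an exponentially large set) is handled correctly: here it is not a union bound that is needed but rather the single comparison ``best value over all $\sigma$ $\ge$ value at $\sigma_0$ $>$ value at $\id$,'' so no union bound over the injection set is incurred and the exponent is governed purely by the two-event concentration above.
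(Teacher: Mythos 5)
Your proposal is correct and follows essentially the same route as the paper: plant a block of $n_c$ unmatched vertices in $B$ whose edge probabilities exceed $\beta+(1-\beta)\rho+\epsilon$, compare the naive objective at the identity (expected value at most $\Lambda^c_{uv}(\beta+(1-\beta)\rho)$ per edge, since the correlation only couples matched pairs) against the permutation sending $[n_c]$ onto that block, and conclude by concentration with deviation $\omega(\epsilon n_c\log n_c)$ against variance scale $\Theta(n_c^2)$, yielding the $\exp\{-C\epsilon^2(\log n_c)^2\}$ exponent. The only cosmetic difference is that the paper concentrates the single difference statistic $X_Q=\tr(\tA\tB-\tA Q\tB Q^T)$ via its McDiarmid-type Proposition rather than the two objective values separately, and (like you) it only needs the single comparison ``$Q$ beats $I$,'' with no union bound over injections.
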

\noindent This result hinges on the fact that the naive padding scheme finds the best matching subgraph, rather than the best matching induced subgraph; indeed, there is no penalty for matching non-edges in $A$ to edges in $B$.
Hence, if $B$ has a dense substructure of size $n_c$ which does not correspond to $A$ then the naive padding scheme will match $A$ to that dense structure with high probability, regardless of the underlying correlation.
The proof demonstrates an extreme case where all vertices in $A$ are matched (incorrectly) to vertices in $B$ without true matches.
Deviations on the proof will yield that any number of vertices may be mismatched with only a mildly denser substructure in $B$.
Recall that since the centered padding scheme and the naive padding scheme are equivalent when $n=n_c$, the requirement that $n_c$ is sufficiently small is necessary.

On the other hand, the centered padding scheme can be guaranteed to result in the correct detection of the subgraph even when the number of vertices in $B$ is exponentially larger than the number of vertices in $A$ provided $R > 1/2 + \epsilon$.
\begin{theorem}\label{thm:center}
Suppose that $A,B\sim \CER(\Lambda, R)$ with $R_{uv} \in [1/2 + \epsilon, 1]$ and $\Lambda_{uv} \in [\alpha,1-\alpha]$
for $\alpha,\epsilon\in (0,1/2)$.
It holds that 
$\frac{\log(n)}{\epsilon^2 n_c \alpha (1 - \alpha)^2} = o(1)$
implies that using the centered padding scheme
$$\Pr[ P^* \neq I ]\leq 2\exp\left\{ -\Theta( n_c (\epsilon\alpha (1-\alpha))^2)\right\}.$$
\end{theorem}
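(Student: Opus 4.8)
The plan is to work with the reformulation already recorded above: under the centered padding, $\tr(\tA P\tB P^{T})=n_c^{2}-4\,d(\sigma)$ whenever $\sigma\colon[n_c]\rightarrowtail[n]$ is the restriction of the permutation $P$ to $[n_c]$, where $d(\sigma)=\sum_{i<j\le n_c}\mathbf{1}\{A_{ij}\neq B_{\sigma(i)\sigma(j)}\}$ is the edit distance between $A$ and the induced subgraph of $B$ on $\sigma([n_c])$. Writing $\id$ for the canonical inclusion $[n_c]\hookrightarrow[n]$, the event $\{P^{*}\neq I\}$ is contained in $\bigcup_{\sigma\neq\id}\{d(\sigma)\le d(\id)\}$, so it suffices to union-bound $\Pr[d(\sigma)\le d(\id)]$ over injections $\sigma\neq\id$, organized by the number $w(\sigma)=\#\{i:\sigma(i)\neq i\}\ge1$ of displaced vertices.

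The first step is a per-pair drift estimate. Call $\{i,j\}\subseteq[n_c]$ \emph{moved} if $\{\sigma(i),\sigma(j)\}\neq\{i,j\}$; for unmoved pairs $B_{\sigma(i)\sigma(j)}=B_{ij}$ identically, so $d(\sigma)-d(\id)=\sum_{\{i,j\}\text{ moved}}X_{ij}$ with $X_{ij}\in\{-1,0,1\}$. For a moved pair $A_{ij}$ and $B_{\sigma(i)\sigma(j)}$ index distinct vertex pairs and are therefore independent under $\CER(\Lambda,R)$, so with $p=\Lambda_{ij}$, $q=\Lambda_{\sigma(i)\sigma(j)}$ we get $\Ex\,\mathbf{1}\{A_{ij}\neq B_{\sigma(i)\sigma(j)}\}=p+q-2pq$ while $\Ex\,\mathbf{1}\{A_{ij}\neq B_{ij}\}=2p(1-p)(1-R_{ij})$. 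Subtracting and using $R_{ij}\ge\tfrac12+\epsilon$ gives $\Ex X_{ij}\ge q(1-2p)+p^{2}+2\epsilon\,p(1-p)\ge 2\epsilon\,p(1-p)\ge 2\epsilon\alpha(1-\alpha)$, since $q(1-2p)+p^{2}\ge 0$ on $[0,1]^{2}$ and $p\in[\alpha,1-\alpha]$. Hence $\Ex[d(\sigma)-d(\id)]\ge 2\epsilon\alpha(1-\alpha)\,m(\sigma)$, with $m(\sigma)$ the number of moved pairs; an elementary count (every displaced-fixed pair is moved, and a displaced-displaced pair fails to move only when it is a transposition of $\sigma$) gives $m(\sigma)\ge\tfrac14\,w(\sigma)\,n_c$ for $n_c$ large.

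The second step is concentration of $d(\sigma)-d(\id)$ below its mean. The key structural fact is that, conditioned on the principal submatrix of $B$ on $[n_c]$, the variables $\{X_{ij}:\{i,j\}\text{ moved}\}$ are independent and bounded by $1$: $X_{ij}$ depends only on $A_{ij}$, on the now-fixed $B_{ij}$, and on $B_{\sigma(i)\sigma(j)}$, and distinct moved pairs use distinct $A$-slots and distinct $B$-slots. A Bernstein/bounded-differences inequality applied conditionally, plus a separate bounded-differences estimate for the conditional mean as a function of the $O(m(\sigma))$ relevant entries of $B$, yields $\Pr[d(\sigma)\le d(\id)]\le 2\exp\{-c\,(\epsilon\alpha(1-\alpha))^{2}\,m(\sigma)\}\le 2\exp\{-c'\,(\epsilon\alpha(1-\alpha))^{2}\,w(\sigma)\,n_c\}$ for universal $c,c'>0$. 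Since there are at most $n_c^{\,w}n^{w}\le n^{2w}$ injections with $w(\sigma)=w$,
\[
 \Pr[P^{*}\neq I]\;\le\;\sum_{w\ge1}n^{2w}\cdot 2\exp\{-c'(\epsilon\alpha(1-\alpha))^{2}w\,n_c\}\;=\;\sum_{w\ge1}2\exp\{-w\big(c'(\epsilon\alpha(1-\alpha))^{2}n_c-2\log n\big)\},
\]
which, under the hypothesis that $\log n$ is $o(\epsilon^{2}n_c\alpha(1-\alpha)^{2})$, is a geometric series controlled by its $w=1$ term, giving the claimed $2\exp\{-\Theta(n_c(\epsilon\alpha(1-\alpha))^{2})\}$.

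The main obstacle is calibrating this concentration-versus-entropy trade-off in the last two steps: the fluctuation of $d(\sigma)-d(\id)$ is only a constant times $m(\sigma)$, while the drift is a factor $\epsilon\alpha(1-\alpha)$ smaller, so the Gaussian-type tail has exponent of order $(\epsilon\alpha(1-\alpha))^{2}m(\sigma)\gtrsim(\epsilon\alpha(1-\alpha))^{2}w\,n_c$, and this must beat the $\approx 2w\log n$ log-cardinality of the injections displacing $w$ vertices — which is exactly where the hypothesis forcing $n_c$ to grow fast enough relative to $\log n$ enters, and what permits $A$ to be exponentially smaller than $B$. Recovering the stated exponent with the right powers of $\alpha$ requires controlling the conditional variance of the sum (of order $\alpha(1-\alpha)\,m(\sigma)$ in the regime where the drift is smallest) rather than using crude $\ell_\infty$ bounded-difference constants; the remaining care is the bookkeeping for moved pairs whose image lies back inside $[n_c]$, where the model's correlation structure must be invoked correctly, and ensuring ties $d(\sigma)=d(\id)$ are placed in the failure event as above.
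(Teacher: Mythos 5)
Your proposal is correct and follows essentially the same route as the paper's proof: a union bound over injections grouped by the number of displaced vertices, a per-moved-pair expected drift of order $\epsilon\,\alpha(1-\alpha)$ obtained from $R_{uv}\ge 1/2+\epsilon$ together with the nonnegativity of the cross terms, a count of $\gtrsim w\,n_c$ moved pairs, and a Gaussian-type tail with exponent $\Theta\bigl((\epsilon\alpha(1-\alpha))^2 w\,n_c\bigr)$ that beats the $n^{O(w)}$ entropy under the stated growth condition. The only substantive difference is the concentration step — you condition on the core block of $B$ and apply Bernstein plus a bounded-differences bound on the conditional mean, while the paper applies a single McDiarmid-type inequality with a variance proxy (its Proposition~\ref{prop:kim}) to the independent Bernoulli generators from its coupling (Proposition~\ref{prop2}) — but both yield the same exponent, and your closing remark about using the true conditional variance rather than crude $\ell_\infty$ constants is exactly the right point for reconciling the powers of $\alpha$ in the hypothesis with those in the conclusion.
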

\noindent Stated simply, this theorem only requires that the correlations are sufficiently large to guarantee that large subgraphs of logarithmic size can be found via an oracle GM algorithm.

Analogous to the naive padding scheme, the centered padding scheme does have its limitations.
The Theorem below indicates that if $n_c$ is too small then with high probability, the centered padding scheme will fail to recover the true correspondence for certain $\Lambda, R$.
Importantly, for the centered padding scheme this is a much weaker limitation than for the naive scheme.

\begin{theorem}
\label{thm:lowbnd2}
Suppose $R,\Lambda_c \in [\alpha,1-\alpha]^{n_c \times n_c}$ for some $\alpha\in (0,1/2)$.
There exists a constants $\xi, \epsilon>0$ such that if $n_c<\xi\log n$ then there exists a $\Lambda\in[0,1]^{n\times n}$ such that
$$\Lambda_{uv} (1-\Lambda_{uv}) R_{uv} \geq 1/2 + \epsilon$$ for all $u\neq v\in[n_c]$, and for $A,B\sim \CER(\Lambda, R)$, using the centered padding scheme,
\begin{align*}
\Pr\left[ P^* \neq I \right]=1-o(1).
\end{align*}
\end{theorem}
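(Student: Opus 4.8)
The plan is to prescribe the $n_c\times n_c$ principal block of $\Lambda$ to be $\Lambda_c$ and to set $\Lambda_{uv}=1/2$ for every pair $(u,v)$ with $\max(u,v)>n_c$. The hypothesized lower bound on $\Lambda_{uv}(1-\Lambda_{uv})R_{uv}$ only constrains indices $u,v\in[n_c]$, where $\Lambda$ agrees with $\Lambda_c$, so it is inherited from the assumptions on $\Lambda_c,R$ and the extension needs only to satisfy $\Lambda\in[0,1]^{n\times n}$, which it does. The point of the construction is that for $(A,B)\sim\CER(\Lambda,R)$ the restriction of $B$ to $D:=\{n_c+1,\dots,n\}$ is an $\ER(|D|,1/2)$ random graph that is independent of $A$ and of the rest of $B$; I will argue that when $n_c<\xi\log n$ this region almost surely contains an induced copy of $A$ \emph{aligned} with $A$, and that such a copy attains the global optimum of the centered objective while no optimizer can fix $[n_c]$ pointwise.

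\emph{Step 1 (an aligned copy appears in $D$).} I would show that with probability $1-o(1)$ there is an injection $\sigma:[n_c]\hookrightarrow D$ with $B_{\sigma(i)\sigma(j)}=A_{ij}$ for all $i,j\in[n_c]$. Rather than hoping an entire aligned copy appears inside a single block of $D$, one builds $\sigma$ vertex by vertex: split $D$ into $n_c$ disjoint groups $G_1,\dots,G_{n_c}$ of size $s:=\lfloor|D|/n_c\rfloor$, and, having fixed $\sigma(1),\dots,\sigma(i-1)$, try to choose $\sigma(i)\in G_i$ whose $B$-adjacencies to $\sigma(1),\dots,\sigma(i-1)$ realize the pattern $(A_{i1},\dots,A_{i,i-1})$. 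Because the $B$-edges internal to $D$ are i.i.d.\ $\mathrm{Bernoulli}(1/2)$ and $G_i$ is disjoint from $G_1,\dots,G_{i-1}$, each candidate in $G_i$ succeeds independently with probability exactly $2^{-(i-1)}\ge 2^{-(n_c-1)}$, so $G_i$ fails to supply a usable vertex with probability at most $(1-2^{-(n_c-1)})^{s}\le\exp(-s\,2^{1-n_c})$. A union bound over $i\in[n_c]$ bounds the failure probability by $n_c\exp(-s\,2^{1-n_c})$. Since $|D|=n-n_c\sim n$, for a sufficiently small absolute constant $\xi>0$ and $n_c<\xi\log n$ one has $n_c2^{n_c}\le n^{1-\delta}$ for some $\delta>0$, so $s\,2^{1-n_c}$ grows polynomially in $n$ and the bound is $o(1)$; this is what fixes $\xi$.

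\emph{Step 2 (comparison of objective values).} For any $P\in\PM_n$ one has $\tr(\tA P\tB P^{T})=\sum_{i,j\in[n_c]}(2A_{ij}-1)(2B_{\nu(i)\nu(j)}-1)$ for an injection $\nu:[n_c]\to[n]$ determined by $P$, and as $P$ ranges over $\PM_n$ this $\nu$ ranges over all injections; since each summand lies in $\{-1,1\}$ and there are $n_c^{2}$ of them, the objective never exceeds $n_c^{2}$ and equals $n_c^2$ exactly when $\nu$ maps $[n_c]$ onto an aligned induced copy of $A$. By Step 1 such a $\nu$ (with range in $D$) exists with probability $1-o(1)$, so the optimum equals $n_c^{2}$. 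On the other hand $P^{*}=I$ precisely when $P$ fixes $[n_c]$ pointwise, and every such $P$ attains $n_c^{2}-4d$, where $d=|\{i<j\le n_c:A_{ij}\ne B_{ij}\}|$. Since $\Lambda_{uv}\in[\alpha,1-\alpha]$ and $R_{uv}\le 1-\alpha$, $\Pr[A_{uv}\ne B_{uv}]=2\Lambda_{uv}(1-\Lambda_{uv})(1-R_{uv})\ge 2\alpha^{2}(1-\alpha)$, and $d$ is a sum of $\binom{n_c}{2}$ independent indicators, so a Chernoff bound gives $d\ge\alpha^{2}(1-\alpha)\binom{n_c}{2}$ with probability $1-o(1)$; on that event $n_c^{2}-4d<n_c^{2}$ by a gap of order $n_c^{2}$. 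Hence no optimizer of the centered problem fixes $[n_c]$ pointwise, i.e.\ $P^{*}\ne I$, and intersecting the two $(1-o(1))$-events from Steps 1 and 2 yields $\Pr[P^{*}\ne I]=1-o(1)$.

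The main obstacle is Step 1, specifically reaching $n_c=\Theta(\log n)$ rather than only $n_c=O(\sqrt{\log n})$. Partitioning $D$ into blocks of size $n_c$ and asking some block to equal $A$ outright requires $|D|\gtrsim 2^{\binom{n_c}{2}}$, which only permits $n_c=O(\sqrt{\log n})$; the vertex-by-vertex construction is what amortizes this down to $2^{-(n_c-1)}$ per vertex and thus allows $n_c$ of logarithmic size. The one delicate point is ensuring that the randomness consumed at step $i$—the $B$-edges from $G_i$ to $\sigma(1),\dots,\sigma(i-1)$—is fresh, which is guaranteed by placing $\sigma(i)$ in its own group $G_i$ and only ever referencing the $\mathrm{Bernoulli}(1/2)$ edges internal to $D$. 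The remaining estimates (the Chernoff bound in Step 2 and the exponential tail bounds in Step 1) are routine.
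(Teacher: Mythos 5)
Your proof is correct, but it takes a genuinely different route from the paper's. The paper argues \emph{locally}: for each junk vertex $n_c+i$ it considers the transposition swapping it with core vertex $n_c$, conditions on the $A$- and $B$-neighborhoods of vertex $n_c$ so that the $n-n_c$ improvement events become conditionally independent, lower-bounds each improvement probability by $C n_c^{-1/2}e^{-Dn_c}$ via a binomial large-deviation bound, and applies the second-moment method to conclude that at least one single transposition strictly beats the identity once $n_c<\xi\log n$ --- so the identity is not even a local optimum under single swaps. You argue \emph{globally}: taking $\Lambda\equiv 1/2$ off the core makes the junk block of $B$ an $\mathrm{ER}(1/2)$ graph independent of $A$, and your greedy vertex-by-vertex embedding (the key device that amortizes the cost to $2^{-(i-1)}$ per vertex and reaches $n_c=\Theta(\log n)$ rather than the $O(\sqrt{\log n})$ a block-at-a-time argument would give) plants an exact aligned copy of $A$ in the junk with probability $1-o(1)$; that copy attains the maximal centered objective $n_c^2$, while any permutation fixing $[n_c]$ attains only $n_c^2-4d$ with $d=\Theta(n_c^2)$ edge disagreements w.h.p. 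Your version is more elementary and yields an explicit $\Theta(n_c^2)$ gap between the optimum and the identity's value, which connects directly to the planted-clique heuristic the paper invokes after the theorem; the paper's version establishes the stronger fact that the identity is not even locally optimal, and its construction only needs the off-core entries of $\Lambda$ to lie in $[\alpha,1-\alpha]$ rather than to equal $1/2$. One shared caveat, not a gap in your argument: the displayed condition $\Lambda_{uv}(1-\Lambda_{uv})R_{uv}\geq 1/2+\epsilon$ is vacuous as written (the left-hand side is at most $1/4$), and neither your proof nor the paper's verifies it; both simply set the core block of $\Lambda$ equal to $\Lambda_c$.
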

\noindent
Note, that the $O(\log n)$ rate corresponds to the information theoretic limit for detecting a planted clique within a random graph, where detecting a clique of size less than $2\log_2 n$ is impossible~\cite{alon1995color}.

Finally, while the oracle padding is inaccessible for general $\Lambda$, it represents the optimal padding scheme as it eliminates any empirical correlations introduced by $\Lambda$ leaving only the theoretical correlations from $R$.
\begin{theorem}\label{thm:oracle}
Suppose that $A,B\sim \CER(\Lambda, R)$ with $R_{uv} \in [\rho, 1]$ and $\Lambda_{uv} \in [\alpha, 1 - \alpha]$,
for some $\alpha, \rho \in (0, 1)$.
Then, using the centered padding scheme,
$$\Pr[ P^* \neq I ]\leq 2\exp\left\{ -\Theta( n_c (\rho\alpha (1-\alpha))^2)\right\}.$$
\end{theorem}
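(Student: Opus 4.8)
\emph{Proof plan.}
Write $n_j=n-n_c$ and recall that under the oracle padding scheme we maximize $\tr(\tA P\tB P^\top)$ over $P\in\PM_n$ with $\tA=(A-\Lambda^c)\oplus\0_{n_j}$ and $\tB=B-\Lambda$. Since the last $n_j$ rows and columns of $\tA$ vanish, the objective depends on $P$ only through the injection $\sigma\colon[n_c]\rightarrowtail[n]$ it induces on the first $n_c$ coordinates, and equals $f(\sigma):=\sum_{u<v\le n_c}(A_{uv}-\Lambda_{uv})(B_{\sigma(u)\sigma(v)}-\Lambda_{\sigma(u)\sigma(v)})$ (diagonal terms drop out since the graphs are hollow). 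The event $\{P^*\ne I\}$ is contained in $\{\exists\,\sigma\ne\id\colon f(\sigma)\ge f(\id)\}$, so it suffices to union-bound $\Pr[f(\sigma)\ge f(\id)]$ over all injections $\sigma$ that are not the identity on $[n_c]$.

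First I would control the first two moments of $g_\sigma:=f(\id)-f(\sigma)$. Call $\{u,v\}\subseteq[n_c]$ \emph{stable} for $\sigma$ if $\{\sigma(u),\sigma(v)\}=\{u,v\}$; for such pairs $B_{\sigma(u)\sigma(v)}=B_{uv}$ by symmetry, so the $\{u,v\}$-summands of $f(\id)$ and $f(\sigma)$ cancel in $g_\sigma$. For every \emph{moved} pair, $A_{uv}$ and $B_{\sigma(u)\sigma(v)}$ are independent under $\CER(\Lambda,R)$, so that summand of $f(\sigma)$ has mean $0$, whereas the corresponding summand of $f(\id)$ has mean $\Cov(A_{uv},B_{uv})=R_{uv}\Lambda_{uv}(1-\Lambda_{uv})\ge\rho\alpha(1-\alpha)$ (both marginals are $\mathrm{Bernoulli}(\Lambda_{uv})$ and $x(1-x)\ge\alpha(1-\alpha)$ on $[\alpha,1-\alpha]$). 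Hence $\Ex g_\sigma\ge M(\sigma)\,\rho\alpha(1-\alpha)$, where $M(\sigma)$ is the number of moved pairs. A short count shows that if $\sigma$ has $k\ge1$ non-fixed points in $[n_c]$ then $M(\sigma)\ge\binom{n_c}{2}-\binom{n_c-k}{2}-\lfloor k/2\rfloor\ge c_0\,k\,n_c$ for an absolute constant $c_0>0$ and all $n_c$ large; crucially, because $\sigma$ is only an injection into $[n]$ (not a permutation of $[n_c]$), the value $k=1$ is possible here, unlike in the equal-order graph matching problem.

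Next I would establish concentration of $g_\sigma$. Viewed as a function of the independent Bernoulli coordinates $\{A_{uv}\}$ and $\{B_{pq}\}$, each $A$-coordinate enters at most one summand and each $B$-coordinate at most two (once as a $B_{uv}$ term, once as a $B_{\sigma(u)\sigma(v)}$ term), and each summand is bounded by an absolute constant; thus $g_\sigma$ has the bounded-differences property over $O(M(\sigma))$ coordinates each of influence $O(1)$. McDiarmid's inequality gives $\Pr[g_\sigma\le\tfrac12\Ex g_\sigma]\le\exp\{-\Theta(M(\sigma)(\rho\alpha(1-\alpha))^2)\}\le\exp\{-\Theta(k\,n_c(\rho\alpha(1-\alpha))^2)\}$. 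Since the number of injections $\sigma\colon[n_c]\rightarrowtail[n]$ with exactly $k$ non-fixed points is at most $\binom{n_c}{k}n^k\le n^{2k}$, a union bound over $k$ yields
\[
\Pr[P^*\ne I]\;\le\;\sum_{k=1}^{n_c}\exp\!\left\{k\big(2\log n-\Theta\big(n_c(\rho\alpha(1-\alpha))^2\big)\big)\right\}.
\]
Provided $n_c(\rho\alpha(1-\alpha))^2$ exceeds $\log n$ by a sufficiently large constant factor (the oracle analogue of the $o(1)$ hypothesis in Theorem~\ref{thm:center}; without such a condition the statement cannot hold, since for $n_c$ much below $\log n$ the graph $B$ has spurious induced copies of $A$), the bracket is $\le-\Theta(n_c(\rho\alpha(1-\alpha))^2)$, the $k=1$ term dominates, and summing the geometric series gives the claimed bound $2\exp\{-\Theta(n_c(\rho\alpha(1-\alpha))^2)\}$.

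The main obstacle I anticipate is not any single estimate but the combinatorial bookkeeping of the union bound: one must get the lower bound $M(\sigma)\gtrsim k\,n_c$ right across the whole range $1\le k\le n_c$ including the boundary cases, and then check that the exponent $k\,n_c(\rho\alpha(1-\alpha))^2$ beats the entropy term $k\log n$ uniformly in $k$ — which is precisely where the hypothesis linking $n_c$, $\log n$, $\rho$, and $\alpha$ enters. Everything else — the covariance computation, the independence-for-moved-pairs observation, and the bounded-differences concentration — is routine and is in fact a simplification of the argument for Theorem~\ref{thm:center}, since oracle centering removes the mean-shift term $(2\Lambda_{uv}-1)(2\Lambda_{\sigma(u)\sigma(v)}-1)$ that there forces the requirement $R>1/2$.
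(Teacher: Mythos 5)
Your proposal is correct and follows essentially the same route as the paper: the paper's own proof of this theorem simply reuses the proof of Theorem~\ref{thm:center} with the single change that $\Ex[\tA_{uv}\tB_{uv}]=R_{uv}\Lambda_{uv}(1-\Lambda_{uv})\ge\rho\alpha(1-\alpha)$ for matched pairs and $0$ for moved pairs, followed by the same concentration-plus-union-bound over the $(k_c,k_j)$-indexed equivalence classes of injections; your moved-pair count, mean lower bound, and $n^{O(k)}$ entropy count are the same ingredients. Two remarks. First, a small technical slip: the coordinates $\{A_{uv}\}\cup\{B_{pq}\}$ are \emph{not} jointly independent (each $A_{uv}$ is correlated with $B_{uv}$), so bounded-differences cannot be invoked over them directly; you must either group each correlated pair $(A_{uv},B_{uv})$ as a single independent vector-valued coordinate, or use the decomposition of each such pair into three independent Bernoullis $(Z_0,Z_1,Z_2)$ that the paper provides precisely for this purpose (Proposition~\ref{prop2}, feeding into Proposition~\ref{prop:kim}). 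The fix is routine and does not change the influence constants. Second, you are right that the union bound forces a hypothesis of the form $\log n = o\bigl(n_c(\rho\alpha(1-\alpha))^2\bigr)$, which the statement of Theorem~\ref{thm:oracle} omits but which is implicitly inherited from Theorem~\ref{thm:center} via the ``mutatis mutandis'' step (and which Theorem~\ref{thm:lowbnd2} shows cannot be dispensed with entirely); flagging this is a genuine and correct observation, not a defect of your argument.
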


\begin{remark}
Note that while the padding schemes are still not equivalent in the homogeneous \ErdosRenyi{} case, the distribution of the matrix $P^*$ are equal under all three padding schemes and hence all are guaranteed to find the correct solution with high probability under identical scenarios.
\end{remark}

\subsection{Computation}\label{sec:computation}

Our approach to solve the graph matching problem in this setting will be identical to an approach for graphs of equal sizes; see the \texttt{FAQ} algorithm of~\cite{FAQ} for full detail.
In particular, we will relax the GMP constraints from $\mathcal{P}$ to $\mathcal{D}$ and use gradient descent starting at a given $D_0$. 
We will also incorporate seeds by appropriately modifying $D_0$.
This gradient ascent approach is then given by Algorithm~\ref{alg:faq}.
\begin{algorithm}[ht!]
\caption{Fast approximate quadratic assign program (FAQ) algorithm~\cite{FAQ} for graph matching.}\label{alg:faq}
\KwData{$A,B\in\AM$, $D_0\in \DSM$, $k=0$}
\While{not converged}{
\nl $P_k\leftarrow \argmin_{P\in \mathcal{P}} -\tr(\tA D_k \tB P)$\; \label{al:grad}
\nl $\alpha_k\leftarrow \argmin_{\alpha\in [0,1]} -\tr(\tA D_\alpha \tB D_\alpha)$, where $D_\alpha= \alpha D_k + (1-\alpha) P_k$\; \label{al:line}
\nl  $D_{k+1}\leftarrow D_{\alpha_k}$ and $k\leftarrow k+1$\; 
}
\nl Project  $D_{k}$ onto $\PM{}$ yielding $P^*$\; \label{al:project}
\nl Return $P^*$
\end{algorithm}

Note that using any of the padding schemes, we do not need to store or compute the entire matrices $D_k$ or $P_k$ as the objective function only depends on their first $n_c$ rows. 
Hence, lines \ref{al:grad} and \ref{al:project} can be simplified and accomplished by searching over the set of $n_c \times n$ matrices corresponding to injections from $[n_c]$ to $[n]$, or equivalently the first $n_c$ rows of permutation matrices.
In this way,  lines \ref{al:grad} and \ref{al:project}  can be solved effectively by variants of the Hungarian algorithm for non-square matrices~\cite{Munkres1957-ny}.
Line~\ref{al:line} is a quadratic equation in $\alpha$ and is easily solved.
Furthermore, by exploiting the fact that $\tilde{A}$ and $\tilde{B}$ are likely formed by the difference between a sparse and low-rank matrix, fast matrix multiplication and storage can also be exploited for further runtime improvements.

The computational complexity of each iteration is that of solving a rectangular linear assignment problem which can be solved in $O(n^2 n_c)$. 
The memory/storage complexity for the graphs is $n^2$ for dense matrices and on the order of the number of edges for sparse graphs.
For the doubly stochastic matrix, this can be stored in $O(n_cn)$ space. 
However, as $D_k = \left(\prod_{j=1}^k \alpha_j\right)D_0 +  \sum_{i=1}^k \left(\prod_{j=i}^k \alpha_j\right)  P_i$ memory (and computational) gains can be made as long as $D_0$ has a sparse plus low-rank structure.
As an example, below we consider $D_0 = \alpha_0 \1/n + (1-\alpha_0) P$ for some permutation matrix $P$.
In this case $D_0$ can be stored effectively with $O(n_c)$ memory and $D_k$ can be stored with at most $O(kn_c)$ memory.
The most memory intensive aspect of the problem is storing the gradient, which in general will not be sparse, however it may also have a sparse plus low-rank structure.

The convergence criterion is generally easy to check as the optimal doubly stochastic matrix is frequently itself a permutation matrix, which also means the final projection step can be omitted.
While this algorithm is not guaranteed to converge to a global optimum, if there are enough seeds or if the matrix $D_0$ is sufficiently close to the identity, the local maximum which this procedure converges to will likely be the identity.

\begin{figure}[t]
\begin{subfigure}[t]{\linewidth}
  \centering
  \includegraphics[width=.8\textwidth]{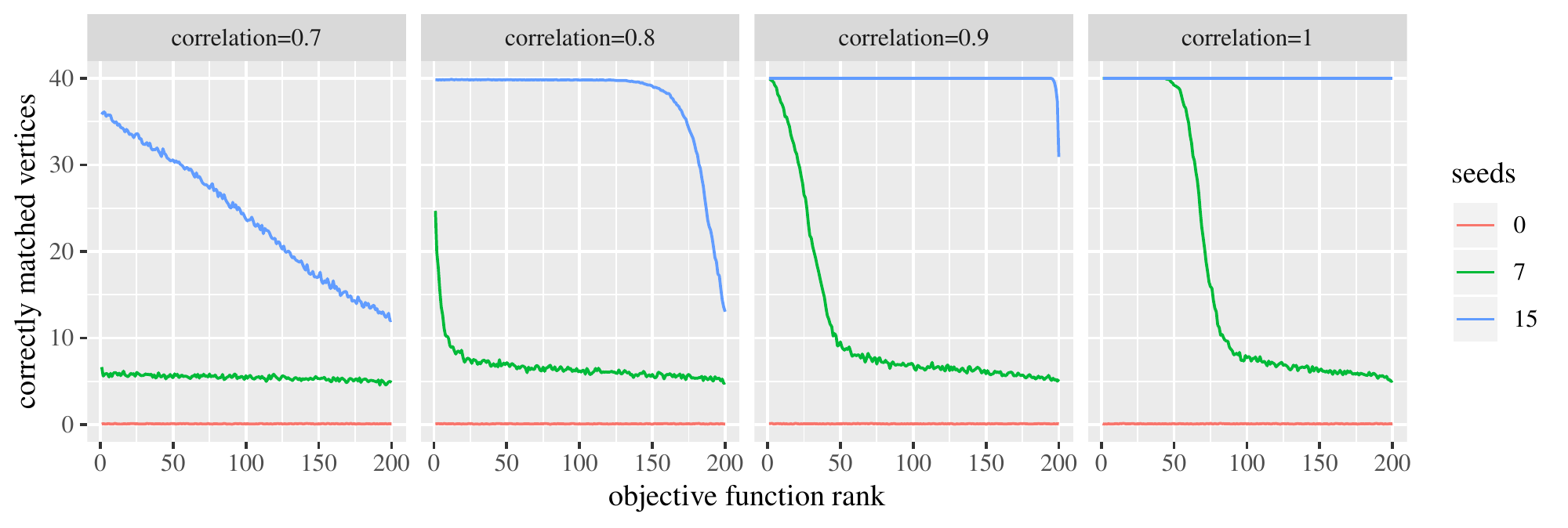}
  \caption{We plot the average number of matched vertices, out of 40, which were correctly matched according to the rank of the objective function.
  The different colors correspond to using either 0, 7, or 15 soft seeds and each panel corresponds to the homogeneous \ErdosRenyi{}, as described in Section~\ref{sec:cor_homog_er}.}
  \label{fig:acc_v_rank}
\end{subfigure}
\begin{subfigure}[t]{\linewidth}
  \centering
  \includegraphics[width=.8\textwidth]{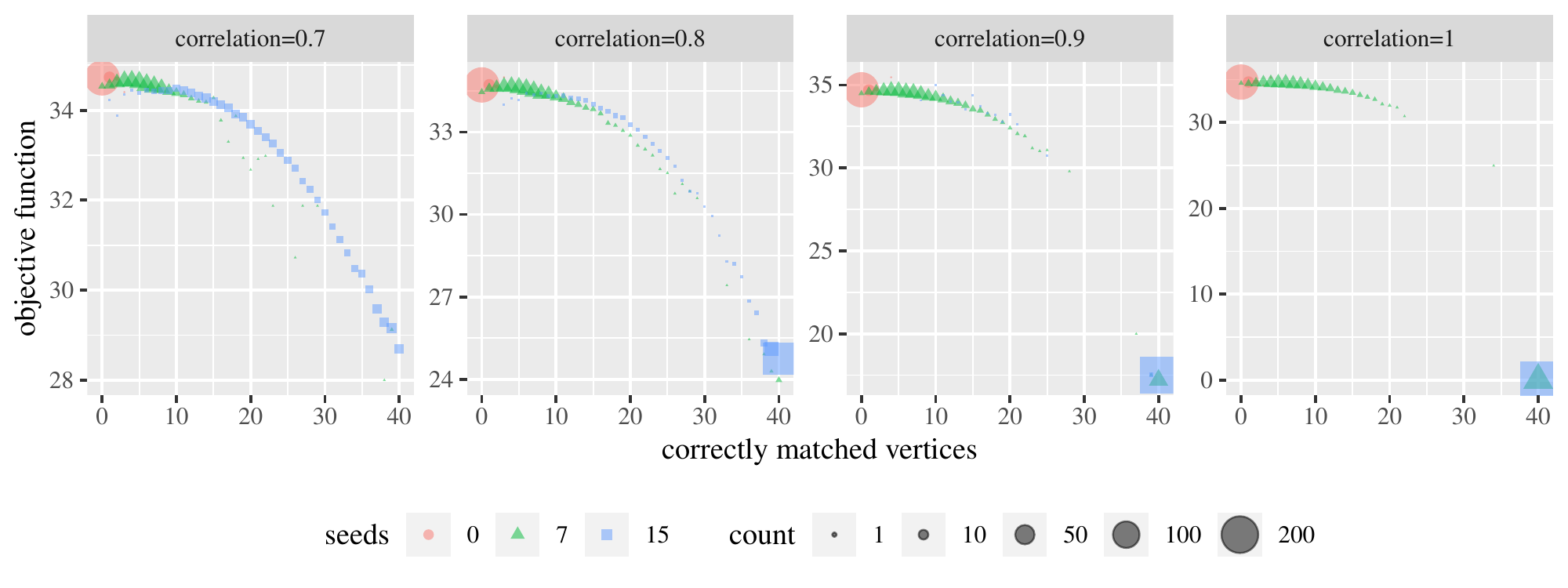}
  \caption{Each point denotes the conditional mean of the objective function given the number of matched vertices.
  Each point is scaled according the average number of restarts that achieve the given number of matched vertices.
  Note that for correlations $\geq 0.8$, there is a large gap in the expected objective function value for when there are $\approx 40$ correct matches as compared to $<40$. }
  \label{fig:f_v_acc}
\end{subfigure}
\caption{Results for homogeneous \ErdosRenyi{} experiments}
\label{fig:homoger}
\end{figure}

\section{Experiments}\label{sec:experiments}

In this section we demonstrate the effectiveness of a graph matching matched filter for errorful subgraph detection in both synthetic and real data settings.
Without loss of generality, we will assume that the seeds are the first $s$ vertices for some $s\leq n_c$.
The matched filter algorithm is given in Algorithm~\ref{alg:mf}
\begin{algorithm}[ht!]
\caption{Matched filters algorithm via random restarts, soft seeding, and gradient descent.}\label{alg:mf}
 \KwData{Template $A$, network $B$, number of seeds $s$, number of MC replicates $M$, padding scheme}
 \KwResult{Matchings $\{B_1,B_2,\cdots,B_M\}$}
 \nl $\tA, \tB \leftarrow$ padded and centered $A,B$ according to padding scheme\;
\For{$i\gets1$ \KwTo $M$ }{
\nl Generate a random doubly stochastic matrix $$D_0 \leftarrow I_s \oplus \left(\alpha P + \frac{1-\alpha}{n-s} \1_{n-s}\right),$$ where $P$ is a random permutation matrix and $\alpha\sim \mathrm{Unif}(0,1)$\label{al:rds}\;
\nl Initialize FAQ at $D_0$; match $\tA$ to $\tB$\;
\nl FAQ output is $P^*$, assignment of vertices in $B$ to vertices in $A$
}
\end{algorithm}

As described in Line~\ref{al:rds}, we initialize the FAQ algorithm at a random start point which initially aligns the seeded vertices across graphs.
In particular, our random starting point has the identity as its principal $s\times s$ submatrix, corresponding to the seeds.
The remaining matrix is sampled as a random convex combination of the the baricenter matrix $\frac{1}{n-s}\1_{n-s}$ and a random permutation matrix sampled uniformly from $\PM_{n-s}$.
Repeating this process $M$ times, we output $M$ potential matches for the subgraph $A$ in $B$.

To explore the effectiveness of this procedure, we consider applying our matched filter in the following scenarios: pairs of homogeneous $\CER$ graphs where we vary the uniform correlation, a planted partition model where the goal is to find the dense partition, a heterogeneous model with two different subgraph sampling schemes, and an application to {\it Drosophila} and Human connectomes.

\subsection[Correlated Homogeneous Erdos Renyi]{Correlated Homogeneous \ErdosRenyi{}}
\label{sec:cor_homog_er}

As a synthetic data example, we will consider subgraph detection in the $\CER{}(\lambda\1, \rho\1)$ with $\lambda=0.5$ (i.e., the maximum entropy ER model) for $\rho=0.7, 0.8, 0.9,$ and $1$.
When $\rho=1$, the induced subgraph of $B$ corresponding to $A$ is exactly isomorphic to $A$.
We consider $n=500$, $n_c=40$, and $M=200$ and we replicated the entire experiment for $200$ sampled graph pairs.
For each experiment, we rank the objective function values, computed as $\sum_{i,j < n_c} (\tA - P \tB P^T)^2_{ij}$, among the $M=200$ random restarts, from smallest to largest.

Figure~\ref{fig:acc_v_rank} shows the average performance, using the centered padding scheme, across the replicates at each rank.
As is evident, without seeds very few vertices are matched correctly, even when the corresponding graphs are isomorphic.
However, as both the number of seeds increase and as the correlations increase, the performance of the procedure becomes very good.
Note that when $\rho = 1$, we would expect exact subgraph matching procedures to perfectly recover the correspondence, while they would fail for all lower correlations.
These exact procedures do not require seeds but generally scale poorly with the number of vertices in each graph.

Interpreting the correlation $0.9$ panel, if $15$ seeds are used then nearly all restarts will yield a perfect recovery of the original correspondence.
Hence, only one restart is required.
Using 7 seeds, most restarts yield suboptimal performance, with only slightly more than the original 7 seeds recovered.
However, if we perform $200$ restarts and choose the best among them then this will often yield a near perfect recovery of the original correspondence.
Similar interpretations can be made for the other correlations.

% \begin{figure*}[tb]
%   \centering
%   \includegraphics[width=\textwidth]{figure/homoger_f_vs_matched_v.pdf}
%   \caption{Each point denotes the conditional mean of the objective function given the number of matched vertices.
%   Each point is scaled according the average number of restarts that achieve the given number of matched vertices.
%   Note that for correlations $\geq 0.8$, there is a large gap in the expected objective function value for when there are $\approx 40$ correct matches as compared to $<40$. }
%   \label{fig:f_v_acc}
% \end{figure*}

Figure~\ref{fig:f_v_acc} gives a complementary view of this same simulation, showing the average value of the objective function as a function of the number of matched vertices.
Each point is scaled according to the average number of restarts (out of 200) which achieve the given number of matched vertices.
For $\rho=0.7$, the objective function decreases smoothly with the number of matched vertices, and the number of matched vertices is relatively evenly spread, especially for 15 seeds.
On the other hand, for higher correlations and for 15 seeds, there is a substantial gap in both the objective function and the matched vertices.

This gap is highly important for practical applications.
Indeed, the presence of a gap in the objective function would indicate that the correct matching was likely found.
Without the gap, the guarantees that the best match in terms of the objective function corresponds to a correct match is less certain (at least to practically ascertain).
Knowledge of the size of the gap would also allow for an adaptive algorithm which performs random restarts until a sufficient objective function gap is observed.
This is discussed further in Section~\ref{sec:dis}.

Finally, table one shows average run times, on a standard laptop, for each restart under each parameterization.
Even under the worst of the settings, 200 restarts can be performed in less than 10 minutes.
It is worth noting that better algorithmic performance correlates to faster algorithmic runtime, a feature which was observed in previous work using the \texttt{FAQ} algorithm~\cite{FAP}.

\begin{table}
\centering
\begin{tabular}{l|llll}
\toprule
    & \multicolumn{4}{c}{runtime in seconds} \\
seeds & $\rho=0.7$       & $\rho=0.8$       & $\rho=0.9$       & $\rho=1$ \\
\midrule
0     & $1.96 \pm 0.565$ & $1.92 \pm 0.478$ & $1.9\phantom{0} \pm 0.479$  & $1.91 \pm 0.48$\\
7     & $1.85 \pm 0.476$ & $1.8\phantom{0} \pm 0.466$  & $1.74 \pm 0.484$ & $1.58 \pm 0.546$\\
15    & $1.5\phantom{0} \pm 0.467$  & $1.03 \pm 0.395$ & $0.73 \pm 0.22$ & $0.61 \pm 0.164$\\
\bottomrule
\end{tabular}
% This is was it was before
% 0   & $4.90 \pm 1.86$  & $4.89 \pm 1.84$  & $4.90 \pm 1.86$  & $4.86 \pm 1.83$\\
% 7   & $4.71 \pm 1.78$  & $4.64 \pm 1.74$  & $4.47 \pm 1.75$  & $3.99 \pm 1.82$\\
% 15  & $3.62 \pm 1.54$  & $2.33 \pm 1.18$  & $1.58 \pm 0.738$ & $1.13 \pm 0.497$\\
\caption{Runtimes in seconds per restart for each setting in terms of the number of seeds and the correlation $\rho$ in the homogeneous \ErdosRenyi{} setting.}
\label{tab:runtime}
\end{table}

\subsection[Planted Partition Correlated Erdos-Renyi graphs]{Planted Partition Correlated \ErdosRenyi{} graphs}\label{sec:pp}

As a another scenario, we consider the problem where the larger graph has a denser subgraph which corresponds to the smaller graph.
We take $n=500$, $n_c=40$, $A,B\sim \CER(\Lambda, 0.9\1{})$ with $$\Lambda=\begin{pmatrix}
  q\1_{n_c} & p\1_{n-n_c,n}\\
  p\1_{n,n-n_c} & p\1_{n-n_c}
\end{pmatrix},$$
where $p=0.25$ and $q\in \{0.25, 0.3, \dotsc, 0.5\}$.
For 100 random restarts, 7 seeds and using the centered padding, we replicated each setting 200 times.
These parameters were chosen as illustrative examples that demonstrate the general phenomena described below.

Figure~\ref{fig:pc_acc} shows the average number of correctly matched vertices, using the centered scheme, at each objective function rank for each value of $q$.
As would be expected, the problem becomes significantly easier as $q$ increases.
A similar gap phenomenon is observed in this setting, as long as $q$ is sufficiently large, though we do not illustrate it here.

\begin{figure}[tb]
  \centering
  \includegraphics[width=.7\linewidth]{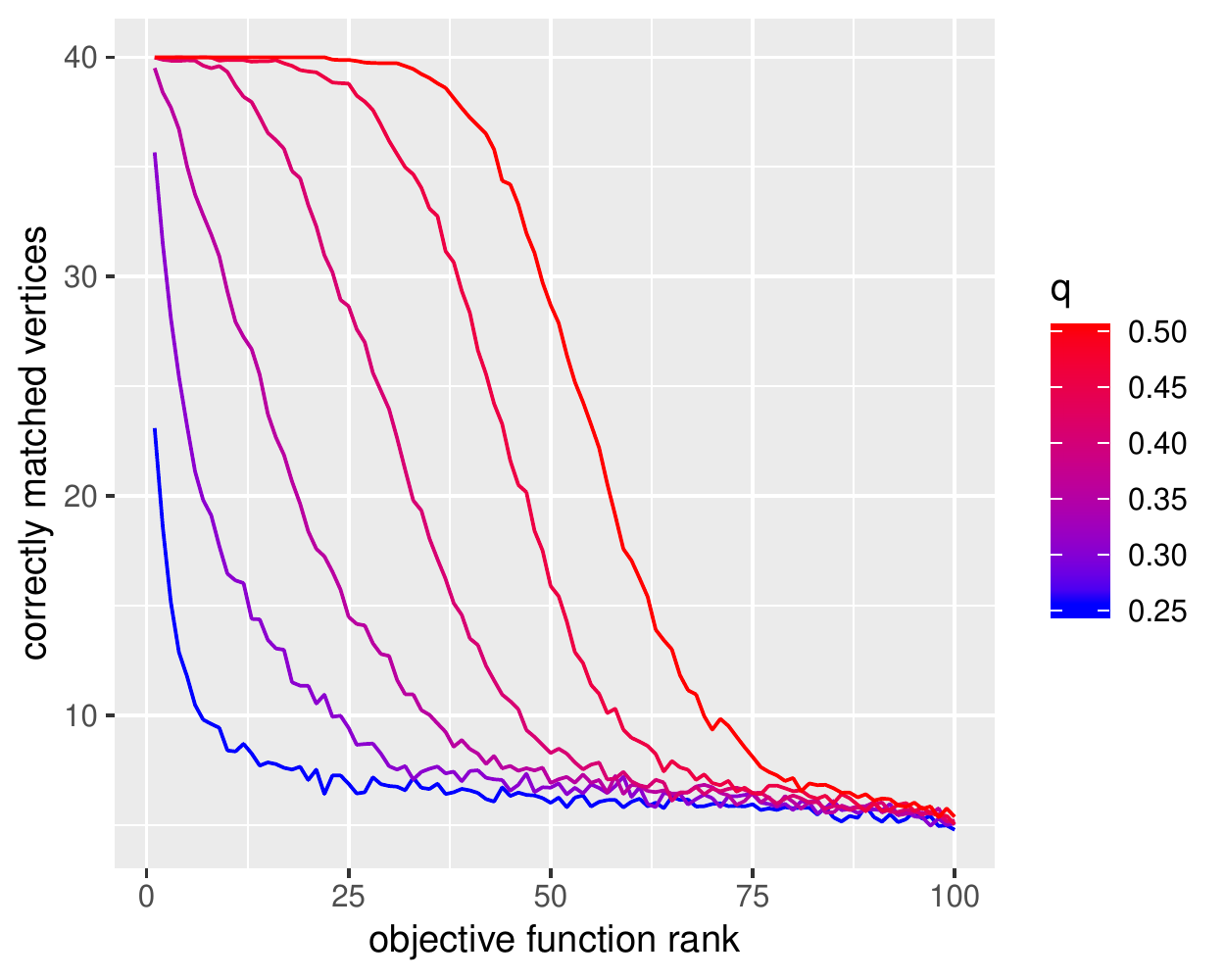}
  \caption{Number of correctly matched vertices as a function of the objective function rank for the planted partition model described in Section~\ref{sec:pp}.
  Each line corresponds to the density of the induced subgraph, ranging from $q=0.25$ to $0.5$.}
  \label{fig:pc_acc}
\end{figure}

\subsection{Random Dot Product Graph}\label{sec:rdpg}

In order to better evaluate the performance of the procedure under heterogeneous graph models, we will now consider graphs from the random dot product graph (RDPG) model~\cite{young2007random,nickel2006random}.
The RDPG model provides a generalization of the stochastic blockmodel, including mixed-membership~\cite{Airoldi2008-hf and degree-correction~\cite{karrer2011stochastic}, allowing for multiple forms heterogeneity in vertex behavior.
As we show below, this heterogeneity allows for the exploration of how the latent structure of the induced subgraph influences performance for our methods.
For this model, the random adjacency matrix $A$ has a low-rank expectation $\Ex[A]$ which is conducive to approximation using spectral methods.
Below we center by a low-rank approximation of the adjacency matrix, which is probabilistically guaranteed to be an accurate estimate of $\Ex[A]$~\cite{Athreya2017-ee}.
This serves to approximate the oracle procedure.}

Each vertex $v$ is associated with a latent position $X_v\in \Re^2$,
which is sampled uniformly from the set $\{(x,y)\in (0,1)^2: x+y < 1\}$.
In this case the matrix $\Lambda$ is given by $\Lambda_{uv} = X_u^T X_v$.
We again take $n=500$ and $n_c=40$ and sample $A,B\sim \CER{}(\Lambda, 0.9\1)$.
The matched subgraphs are sampled in two different ways.
Under one sampling design, the matched nodes simply correspond to a random sample of the latent positions. 
Under the ``max-angle'' design, the matched nodes are selected to be those with the largest ratio between the second and first elements of the latent positions.
Figure~\ref{fig:rdpg_lp} shows a representative set of latent positions corresponding to this model.
The red circles correspond to the max-angle sampling and the points in blue correspond to a possible random sampling (these do not overlap for the purposes of visualization).

\begin{figure}[tb]
  \centering
  \includegraphics[width=.7\linewidth]{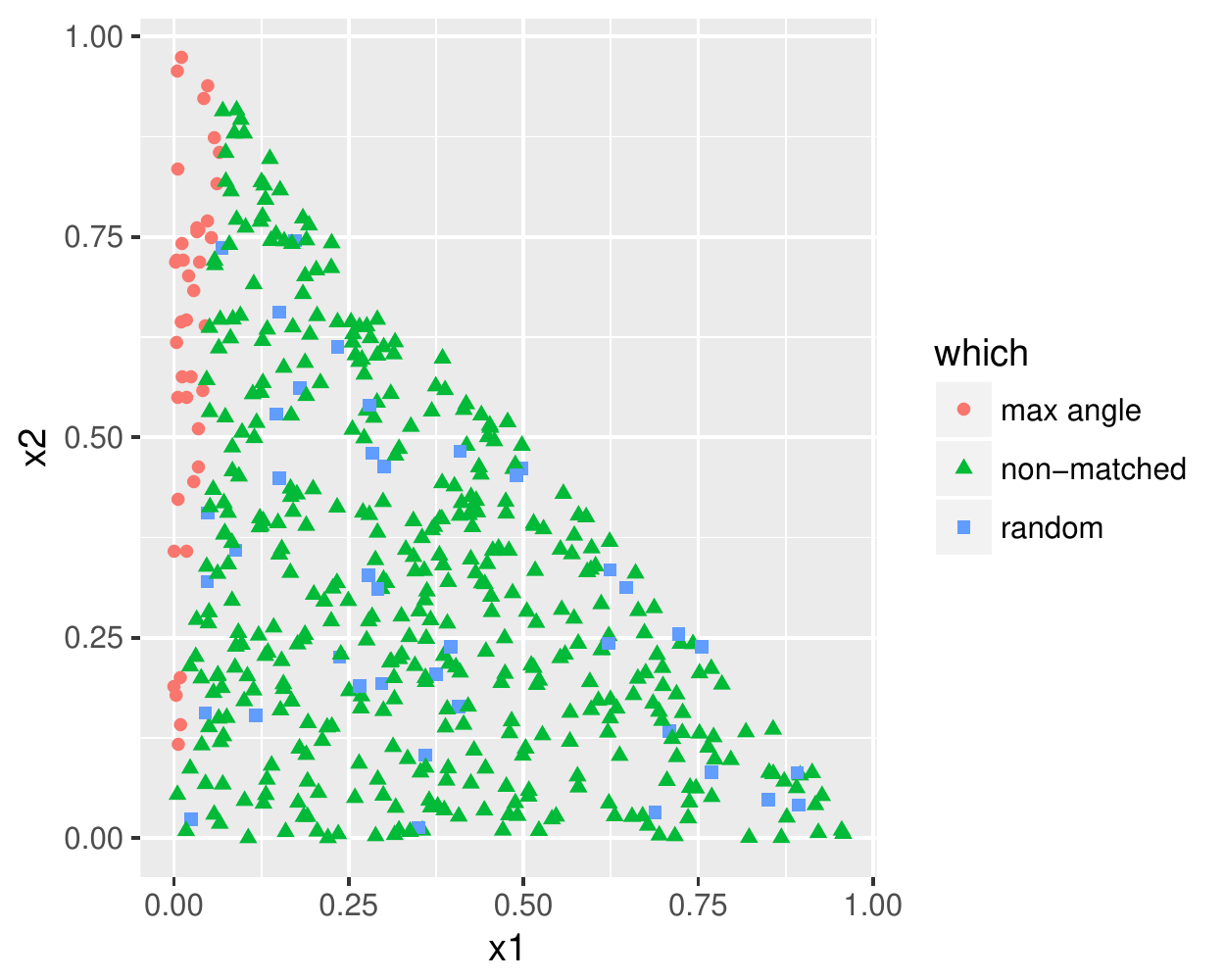}
  \caption{A representative set of latent positions for the random dot product graph model.
  The red circles correspond to the max-angle sampling and the points in blue correspond to a possible random sampling (these do not overlap for the purposes of visualization).
}
  \label{fig:rdpg_lp}
\end{figure}

We sampled 200 distinct graphs under each subgraph sampling setting and performed 100 random restarts for each sampled graphs.
For each restart, the matchings were performed with the naive, centered, and two ``oracle''-type padding schemes. 
As the model above is a low-rank model, we centered each matrix according to its best rank-1 and rank-2 approximations to approximate the oracle padding scheme. 
Letting, $$\hat{\Lambda^{(M)}_r=\argmin_{\Lambda: \mathrm{rank}(\Lambda)=r} \|M-\Lambda\|_F^2}
% \quad \text{and} \quad
% \hat{\Lambda}^{(B)}_r=\argmin_{\Lambda: \mathrm{rank}(\Lambda)=r} \|B-\Lambda\|_F^2
,$$
for each of $M=A$ and $B$,
these schemes match $(A-\hat{\Lambda}^{(A)}_r)\oplus \0_{n-n_c}$ to $B-\hat{\Lambda}^{(B)}_r$ for $r=1,2$.

Figure~\ref{fig:rdpg_acc} shows the average accuracy at each rank of the objective function.
First, the max-angle subgraph is clearly much easier to discover than the random subgraph.
The center scheme had by far the best performance for the max-angle case and in the random case it also performed well.
The rank-2 centering scheme also performed well in the max-angle case and had very similar performance to the center scheme.
Both the rank-1 and naive padding schemes failed to display good performance in either case.

\begin{figure}[tb]
  \begin{subfigure}[t]{.485\linewidth}
  \centering
    \includegraphics[width = \linewidth]{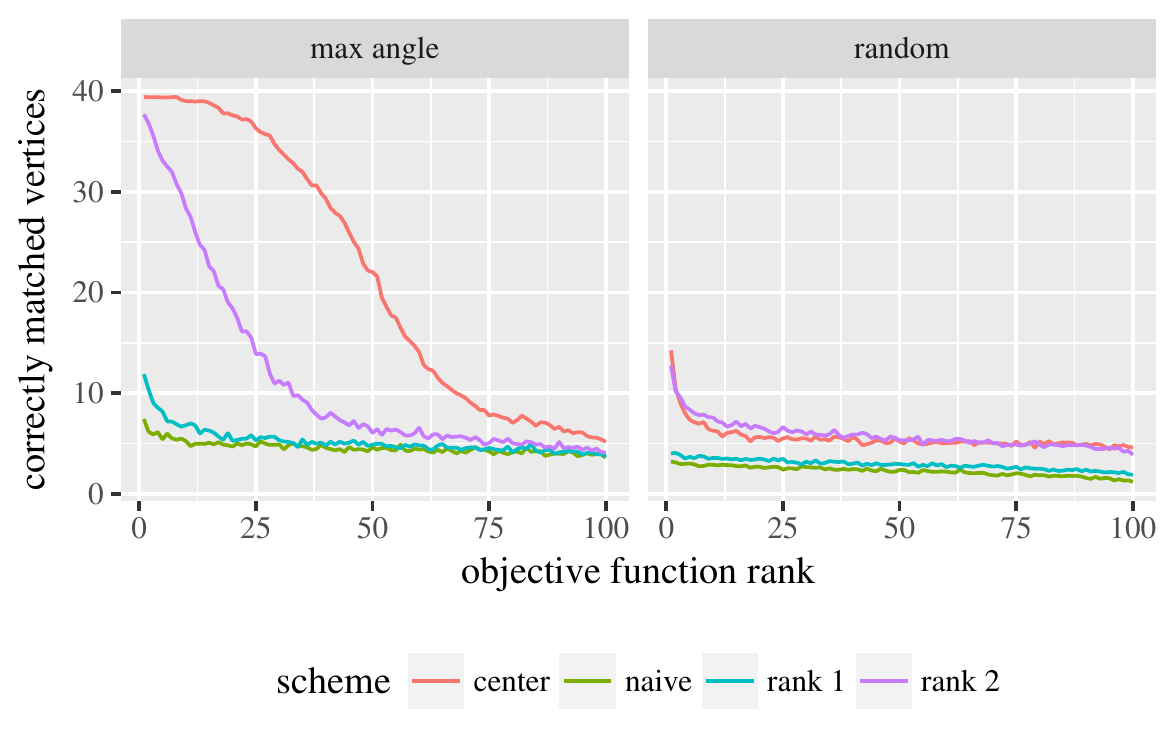}
    \caption{The number of correctly matched vertices out of 40 according the ranks of the objective function across 100 random restarts.
    The left panel corresponds to the scenario where the smaller graph is correlated with a subgraph determined by underlying vertex attributes and the right panel corresponds to a random subgraph.}
    \label{fig:rdpg_acc}
  \end{subfigure}\hfill
  \begin{subfigure}[t]{.485\linewidth}
  \centering
    \includegraphics[width = \linewidth]{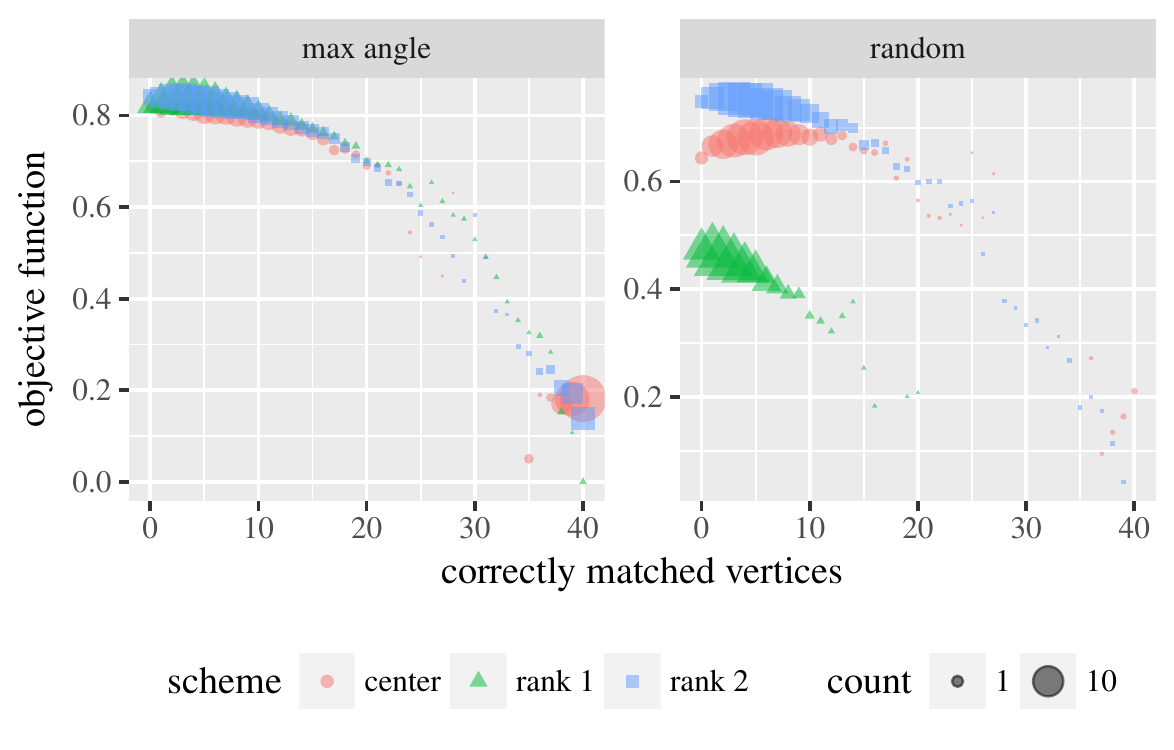}
  \caption{
  Each point denotes the conditional mean of the objective function given the number of matched vertices.
  Each point is scaled according the average number of restarts that achieve the given number of matched vertices.
  Note that despite the fact that there are fewer high accuracy solutions for the random subgraph, in both situations there is a substantial gap between the objective function value for high accuracy solutions and those with low accuracy.}
  \label{fig:rdpg_gap}
  \end{subfigure}
  \caption{Random dot product graph matching results.}
  \label{fig:rdpg}
\end{figure}

Under either scenario, the rank-one approximation eliminates most degree based differences between vertices.
In some cases this may be helpful if the degree information is not informative about the match, but in this case it appeared to hurt performance in both cases.
Rank-two centering in this case seems to restore some of the performance losses from rank-1 centering.
This also is somewhat unexpected since the heterogeneity from both principal directions can be used to help identify the vertices.
The reasons that the rank-2 centering is better than the rank-1 centering are unclear and an area of current investigation.
Note that constant centering maintains all differential vertex behavior which is a strong indicator of which vertices to match to.
The naive padding fails here as there are always sets of vertices with similar relative propensities for adjacencies to other vertices but with strictly larger absolute probabilities of adjacencies.

\begin{figure*}[t]
  \begin{subfigure}[t]{.44\textwidth}\vskip 0pt
  \includegraphics[width=\textwidth]{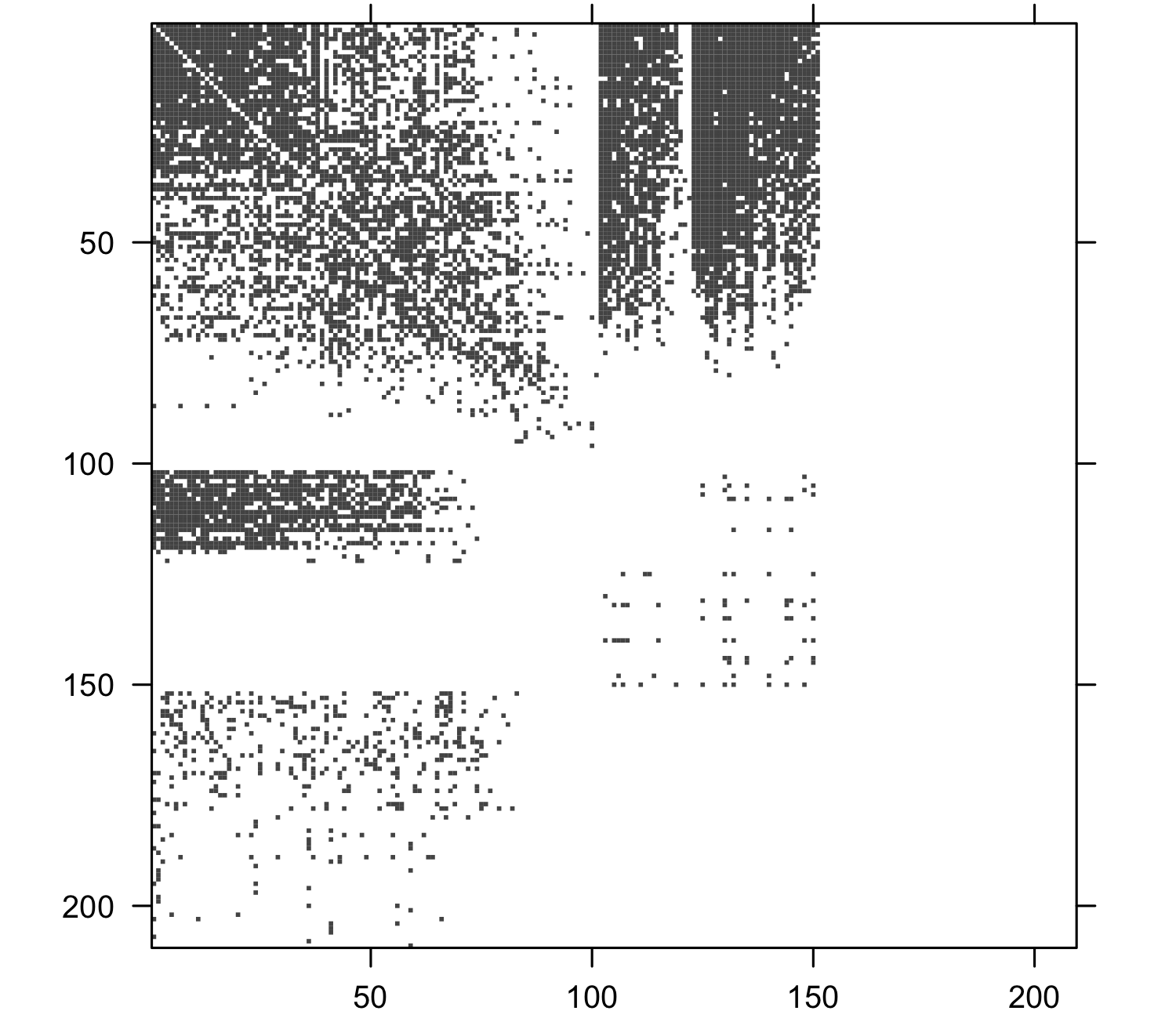}
  \caption{{\em Drosophila} left hemisphere}
  \label{fig:kc_left}
  \end{subfigure}
  \begin{subfigure}[t]{.29\textwidth}\vskip 0pt
  \includegraphics[width=0.862\textwidth]{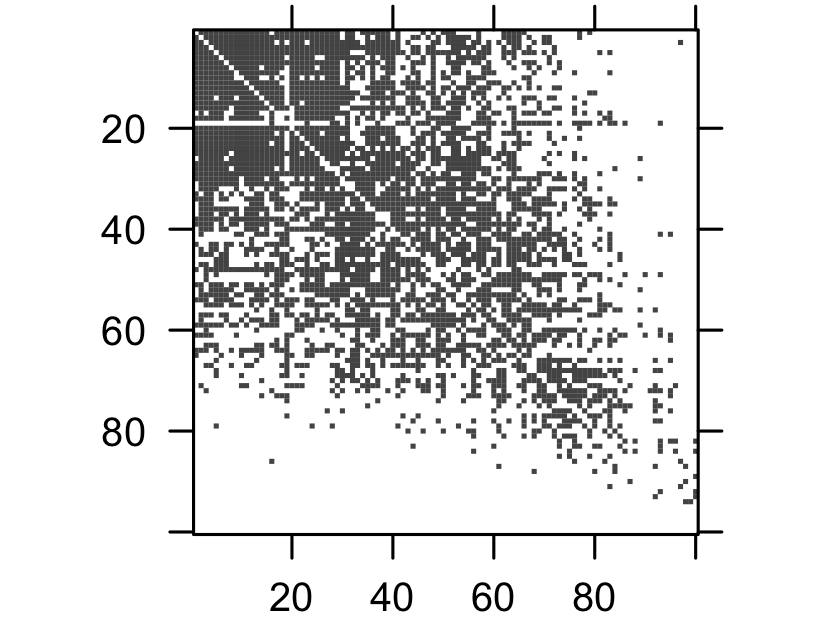}
  \caption{right Kenyon cells}
  \label{fig:kc_right}
  \end{subfigure}\hfill
  \begin{subfigure}[t]{.26\textwidth}\vskip 0pt
  \includegraphics[width=\textwidth]{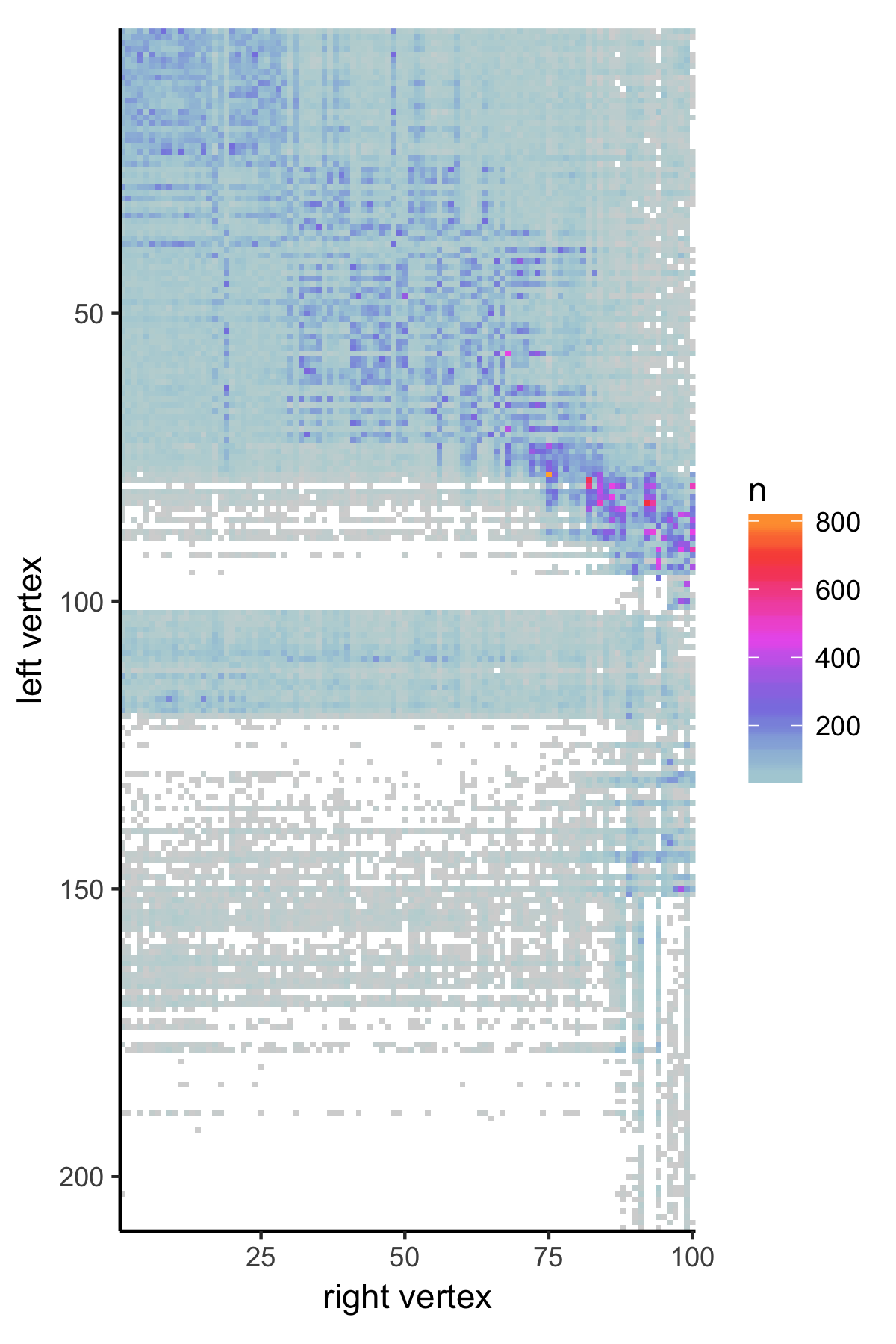}
  \caption{matched pair frequencies}
  \label{fig:kc_matched_pair}
  \end{subfigure}
  \caption{Connectomes and matching analysis of the {\em Drosophila} connectomes.
  (a) and (b) show adjacency matrices for the two {\it Drosophila} connectomes. Dark pixels correspond to a directed edge being present.
  The first 100 rows/columns of the left hemisphere are the K-cells.
  Panel (c) shows the number of times that each node in the right hemisphere was matched to each node in the left hemisphere.}
  % \caption{Adjacency matrices for the two {\it Drosophila} connectomes. Dark pixels correspond to a directed edge being present. Left: The 209 node graph corresponding to the left hemisphere of the {\it Drosophila} mushroom body. The first 100 nodes (rows/columns) are the K-cells. Right: The 100 nodes graph of K-cells in the right hemisphere of the {\it Drosophila} mushroom body.}
  \label{fig:kc_spy}
\end{figure*}

To understand why the max-angle problem is so much easier, note that, under either subgraph sampling scheme, there will be many nodes which have similar behavior to one or more of the sampled nodes due to having nearby latent positions.
Under random sampling, most of these similar nodes are likely to correspond to nodes that do not have a match and hence will introduce errors.
However, under max-angle sampling the similar nodes have a better chance of being other matchable nodes which will introduce fewer errors as the theoretical correlation will strongly distinguish these vertices.

Figure~\ref{fig:rdpg_gap} shows the average value of the objective function as a function of the number of matched vertices.
Since the different centering schemes have different objective functions which will have slightly different scales, we rescaled objectives functions for each scheme so that the largest value for each was set to zero.
The original objective functions for the rank-1, rank-2, and center schemes ranged from 7--23, 8--18, and 11--36 for the max angle setting and 15--25, 9--17, and 9--31 for the random setting, respectively.
Each point is sized according to the average number of restarts (out of 100) which achieve the given number of matched vertices.
Note that we again observe a gap in the objective function between the bulk of solutions with less than 20 correct matches and those with more than 35 correct matches.

\subsection{Finding Kenyon cells in a {\it Drosophila} connectome}

As an application to brain networks, we consider using the matched filter to locate an induced subgraph of Kenyon (K) cells  in the fully reconstructed {\it Drosophila} mushroom body of~\cite{eichler2017complete}.
Using the induced subgraph of the K-cells in the right hemisphere of the mushroom body (i.e., as $A$), we seek to find the K-cells on the left hemisphere (i.e., as $B$).
Although in this example, the K-cells are identified across both hemispheres, this was achieved only with great effort and expenditure.
Being able to use one hemisphere to locate structure in the other hemisphere could potentially allow for faster, cheaper neuron identification in future connectomes.
In particular, this can be extended to finding similar structures across connectomes collected from multiple individuals.

After initial data preprocessing, there are $n_c=100$ K-cells in each hemisphere and $n=209$ vertices total in the right hemisphere.
The pair of graphs are illustrated in Figure~\ref{fig:kc_spy}, with the first 100 nodes in the left image being the K-cells of the left hemisphere.
% \captionsetup[subfigure]{justification=raggedright}

As the true correspondence between these pair of graphs is unknown, the accuracy is measured by the number of cells in the right K graph which are matched to K-cells in the left graph, rather then the number of correctly matched pairs.

We consider $s=0$ and $10$ soft seeds and $M=50,000$ random restarts for our matched filter.
The seeds were determined using cells which were given the same name across the two hemispheres.

In addition to performing the matching in our previous examples using naive, centered, and low-rank centered schemes, we also consider a second ``re-matching''.
Letting $D^*_1$ denote the doubly stochastic solution found by the first matching (prior to projecting to the permutations), we consider rematching starting at $D^*_1$ using a (different) low-rank centering.
In this way the initial matching may find gross similarities which are then refined by removing more empirical correlation between the graphs.
Whether this procedure is effective will of course depend on the particular choices for the first and second matching.

\begin{figure*}[tb]
\centering
\includegraphics[width=\textwidth]{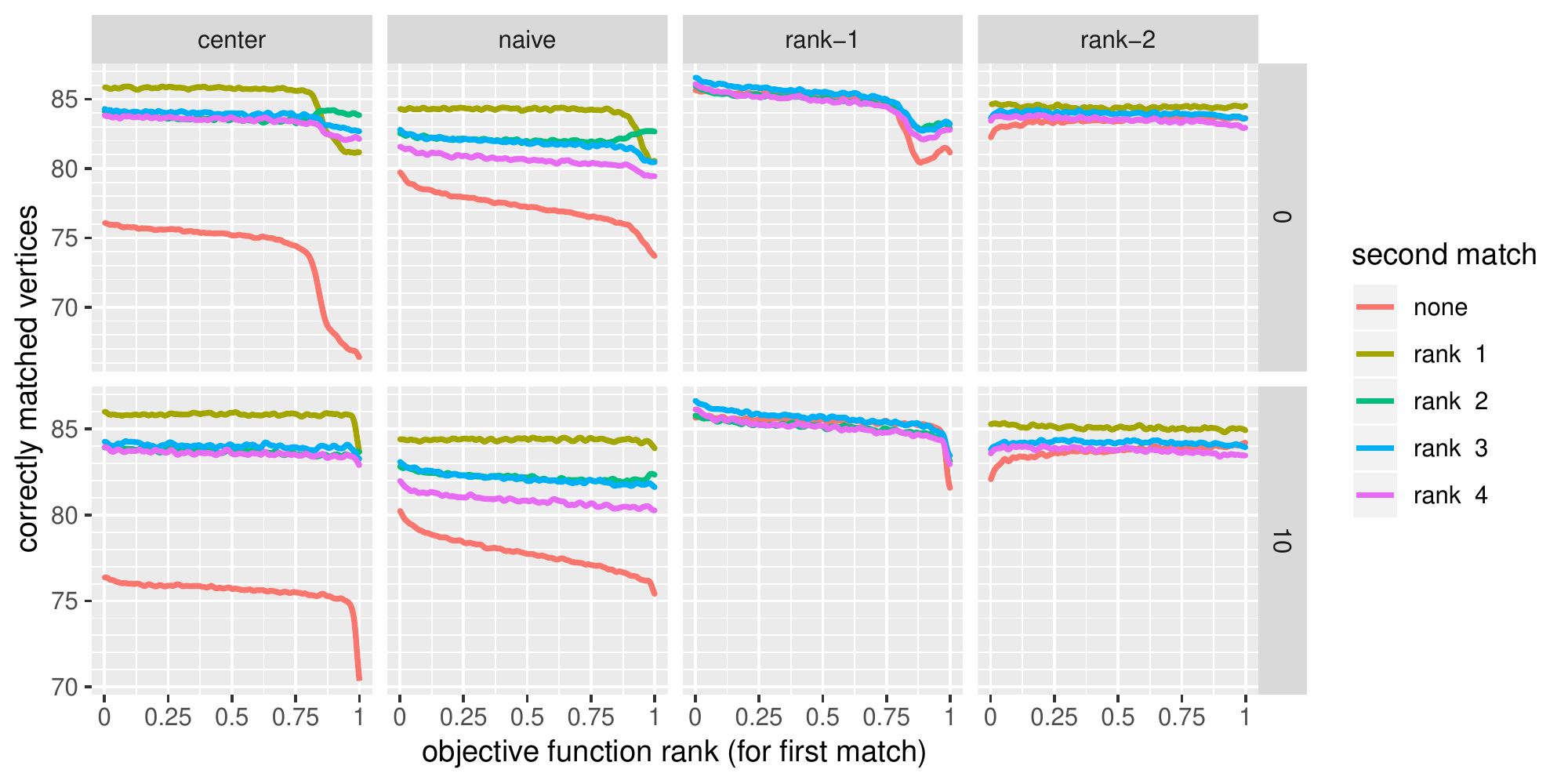}
\caption{These panels show the accuracy for detecting the 100 K-cells in the left hemisphere using the K-cells subgraph for the right hemisphere.
The four columns correspond to the 4 different centering options and each curve corresponds to the centering that was used for the second matching.
Kernel smoothing with bandwidth $0.02$ was used to smooth the average accuracy across the normalized objective function ranks (for the first match).
The two rows correspond to using 0 and 10 seeds.}
\label{fig:kc}
\end{figure*}

Figure~\ref{fig:kc} shows the number of correctly matched vertices at each rank of the objective function, after the first match. 
The ranks were linearly rescaled to be between 0 and 1.
Importantly, only the ranking of the initial ranking is used, (not of the second ranking).

In this instance we smoothed the performance across ranks using a Gaussian kernel smoother with bandwidth $0.02$.
The performance of each matching scheme and the scheme used for the second matching can all have substantial impacts on the overall performance.
For example, while the centered scheme alone has very poor performance, if it is followed by the rank-1 scheme then the performance becomes better than the naive.

The overall best performance, according to this smoothed performances estimate, is achieved by using the rank-1 scheme followed by the rank-3 scheme.
As these graphs have many matches with similar performance, each of which are local minima for the various objective functions, we fail to observe the substantial gap between accurate and inaccurate solutions. 
Indeed, most solutions under the rank-1 scheme achieve between 80 and 91 correct matches.

We note that in this instance the number of seeds barely impacts performance.
This is likely due to two reasons.
These seeds are merely guesses at some true correspondence and the best matchings often does not maintain all of the original seeds.
Indeed, using 10 seeds, nearly every match does not include at least one of the original seed pairs.
Also, as can be seen in Fig.~\ref{fig:kc} the graphs have a very distinctive structure where certain nodes, such as those with indices greater than 125, are very unlikely to be matched because of the directed nature of the graph.
These nodes can be excluded, which is the opposite side of the coin of having seeds, and also makes the matching problem easier.

In order to visualize matches across restarts, for each pair of vertices across graphs, we can count the number of restarts where the pair was matched.
Fig.~\ref{fig:kc_matched_pair} shows a heat map for these counts for the matches achieved using the rank-1 centering followed by the rank-3 centering using 0 seeds.
The total number of restarts was 5000, corresponding to the column sums of the corresponding matrix.}

Most of the matchings occur between Kenyon cells and even within these matching there are clear patterns of certain vertices frequently being matched to each other and groups of vertices that appear to be mapped together.
Note also that many match pairs occur infrequently, with 32\% of pairs never being matched.
58\% of pairs which were matched were matched less than 100 times.
This visualization and the associated matrix could easily be further analyzed to discover distinct patterns and grouping of vertices.

Table~\ref{tab:best_obj} shows the matching performance (out of 100) for the solution which achieved the smallest objective function (after the first matching).
Using this simple method for selecting a match, we achieve 89 correct matches.
The best overall matches were able to find 94 K-cells but these matches did not appear among the top 1000 objective functions values.

\begin{table}[tb]
    \caption{The number of K-cells in the left hemishpere detected by the match with the best objective function after the first match. The best among these, highlighted in bold, is 89 which was achieved by first performing a rank-1 centered match and then performing a rank-4 centered match, or a rank-2 centered match in the case of zero seeds.}
    \label{tab:best_obj}
    \centering

\begin{tabular}{rl|ccccc}
\toprule
  & & \multicolumn{5}{c}{\textbf{second match}} \\
\textbf{seeds} & \textbf{scheme} & none & rank-1 & rank-2 & rank-3 & rank-4\\
\midrule
0 & rank-1 & 86 &   & \textbf{89} & 88 & \textbf{89}\\
  & rank-2 & 82 & 82 &   & 83 & 85\\
  & center & 77 & 86 & 82 & 80 & 85\\
  & naive & 84 & 87 & 83 & 86 & 86\\
\midrule
10 & rank-1 & 84 &   & 85 & 86 & \textbf{89}\\
   & rank-2 & 76 & 84 &   & 84 & 81\\
   & center & 78 & 86 & 82 & 83 & 83\\
   & naive & 84 & 88 & 85 & 83 & 86\\
\bottomrule
\end{tabular}
\end{table}

One might imagine that better performance could be achieved by sorting according to the objective function value for the second match. 
However, this actually results in adverse behavior where lower objective functions result in poorer performance.
While we do not exactly understand this phenomenon, if we view the second match as only a refinement, then we postulate that lower objective function values have strayed farther from the initial match, which may reduce performance if the initial match is good.

\begin{table*}[tb]
    \caption{Performance comparisons for the large scale DTMRI examples.}
    \label{tab:dt_stat}
    \centering
{\tiny
\begin{tabular}{r|cc|ccc|ccc}\toprule
\multicolumn{3}{c|}{} & \multicolumn{6}{c}{\textbf{best matches}} \\
\textbf{region} & \textbf{\# nodes} & \textbf{\# edges} & \multicolumn{3}{c|}{\textbf{fewest edge errors}} & \multicolumn{3}{c}{\textbf{most correct vertices}} \\\midrule
left         & 20,412   & 922,793            & \textbf{\# correct} & \textbf{\# edge} & \textbf{\# seeds} & \textbf{\# correct} & \textbf{\# edge} & \textbf{\# seeds}  \\
right       & 20,401   & 921,865            & \textbf{matches}    & \textbf{errors}  &  & \textbf{matches}    & \textbf{errors} &  \\ \midrule
left \phantom{0}7        & 186      & 611                & 9          & 49      & 0 & 56         & 121 & 20 \\
right \phantom{0}7       & 173      & 646                & 0          & 81      & 0 & 65         & 576 & 20 \\
left 20       & 136      & 816                & 1          & 121     & 0 & 78         & 324 & 10 \\
right 20      & 192      & 1077               & 2          & 144     & 0 & 56         & 256 & 20 \\
left 33       & 70       & 369                & 0          & 36      & 0 & 45         & 289 & 20 \\
right 33      & 81       & 355                & 0          & 49      & 0 & 30         & 121 & 10 \\
left 34       & 206      & 1528               & 0          & 324     & 0 & 78         & 729 & 10 \\
right 34      & 173      & 1363               & 0          & 256     & 0 & 95         & 625 & 10 \\\bottomrule
\end{tabular}}
\end{table*}

\subsection[Large-scale DTMRI Graphs]{Large-scale DTMRI Graphs}

Diffusion tensor magnetic resonance imaging (DTMRI) is a form of MRI which allows for the estimation of coarse scale approximations of the networks of a human brain~\cite{Kiar2018-ti,Kiar2018-fl}, so called connectomes.
While the estimated graphs frequently only have 10s or 100s of vertices, recent tractography methods and parcellation techniques have allowed the creation of connectomes with tens of thousands of vertices.
To illustrate the ability of algorithm to scale to larger problems, we considered a matching problem on one such graph.

In particular, we used the DS72784 atlas from the neurodata repository~\cite{Kiar2018-ti}, and considered a problem analogous to finding Kenyon cells in the Drosophila connectome.
The graph has $\approx 40$ thousand nodes. % , of which $20,401$ are in the right hemisphere $20,412$ are in the left hemisphere.
Using the coarser Desikan atlas, each vertex can be assigned to one of 70 regions, 35 in each hemisphere.
Corresponding regions are numbered by $r$ and $r+35$, for $r=1,\dotsc,35$.

As in the Drosophila example, we considered the problem of matching the subgraph corresponding to a region in one hemisphere, to the other hemisphere.
In particular, we considered the matching of induced subgraphs of regions 7, 20, 33 and 34 in the left hemisphere to the right hemisphere, and matching regions 42, 55, 68, and 69, in the right hemisphere to the left hemisphere.
These regions were picked because they were relatively close in number of vertices and edges and not too large.
Details, for the numbers of vertices and edges in each graph are in the first three columns Table~\ref{tab:dt_stat}.

For these experiments we used the same randomized initialization procedure and considered using 0, 10 and 20 seeds.
As the true vertex correspondence is unknown, for seeds we randomly selected pairs of vertices from the matched regions.
Each of the eight region-to-hemisphere matchings was repeated 400 times for each seeding level.
While the number of iterations required per repetition varied, the time per iteration ranged from $5.6\pm 1.5$ seconds for the smallest graphs to $8.1\pm 2.6$ seconds for the largest graphs.

As the goal for these experiments was primarily to show the ability of our methods to scale to larger problems, we only considered using the centered padding scheme.
The results in terms of the number of vertices correctly identified as belonging to the corresponding region indicate that using this procedure we were unable to accurately locate the corresponding region.

The rightmost 6 columns of Table~\ref{tab:dt_stat} show the number of correctly matched vertices and the number of edge errors for the best edge error match, with the fewest number of edge errors, and the best region match, the match with the most vertices matched to the correct region.
We also indicate the number of seeds used for each of these matchings.
Note that the fewest edge error matching all used 0 seeds, had less than half the number of edge errors as the best region match, and generally did not find any vertices in the correct region.

Figure~\ref{fig:dt_example} shows an example of the two matchings for matching region 7 in the left hemisphere. 
The matching on the left is the best matching achieved in terms of minimizing the objective function and the matching on the right is the best matching achieved in terms of vertices correctly identified as belonging to the corresponding region.
The central figure shows the adjacency matrix for the original small graph and the top left and right show the adjacency matrices for the induced subgraphs that were found by the procedure.
The bottom left and right show the difference between the original graph and the matched graph, with missing edges shown in blue and extra edges shown in red.
Both matches appear to share the overall structure of the center plot but the left match matches many finer details that the right match does not.

\begin{figure*}[tb]
	\centering
	\begin{subfigure}[t]{.3\textwidth}\vskip 0pt
	\includegraphics[width=\linewidth]{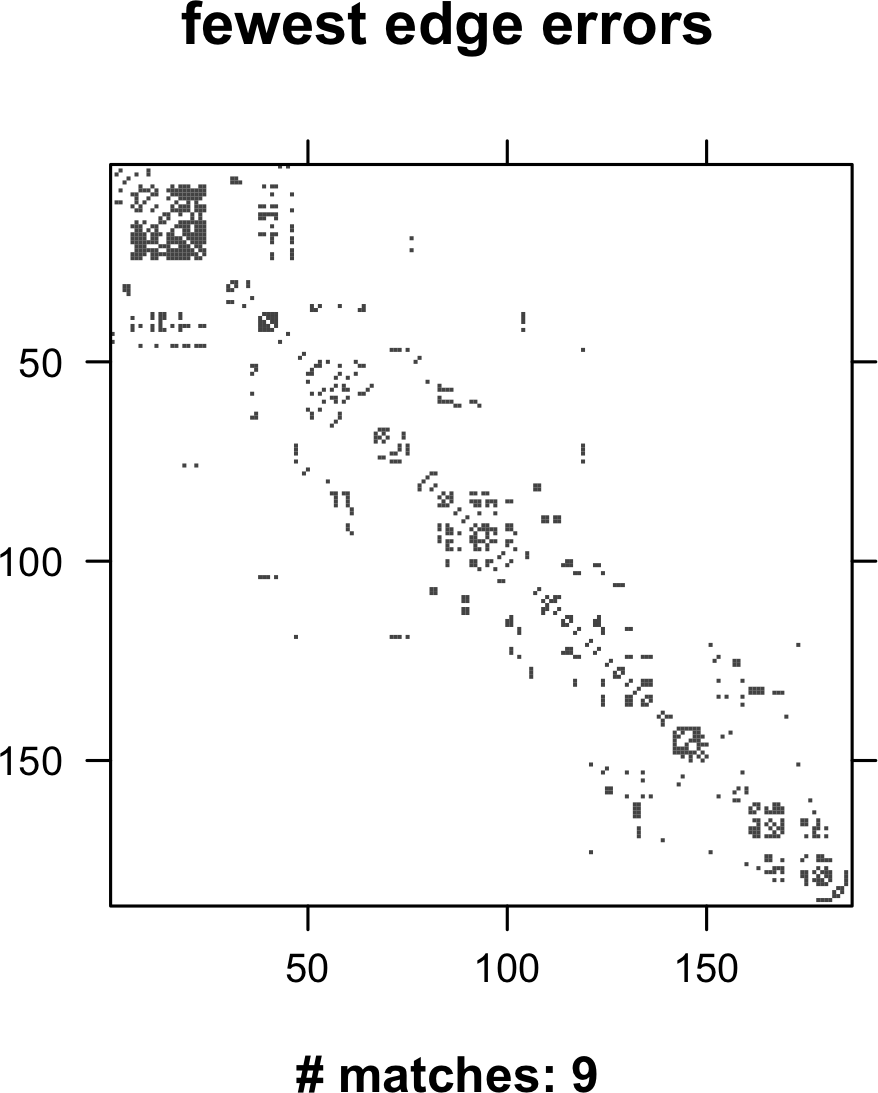}\\[3pt]
	\includegraphics[width=\linewidth]{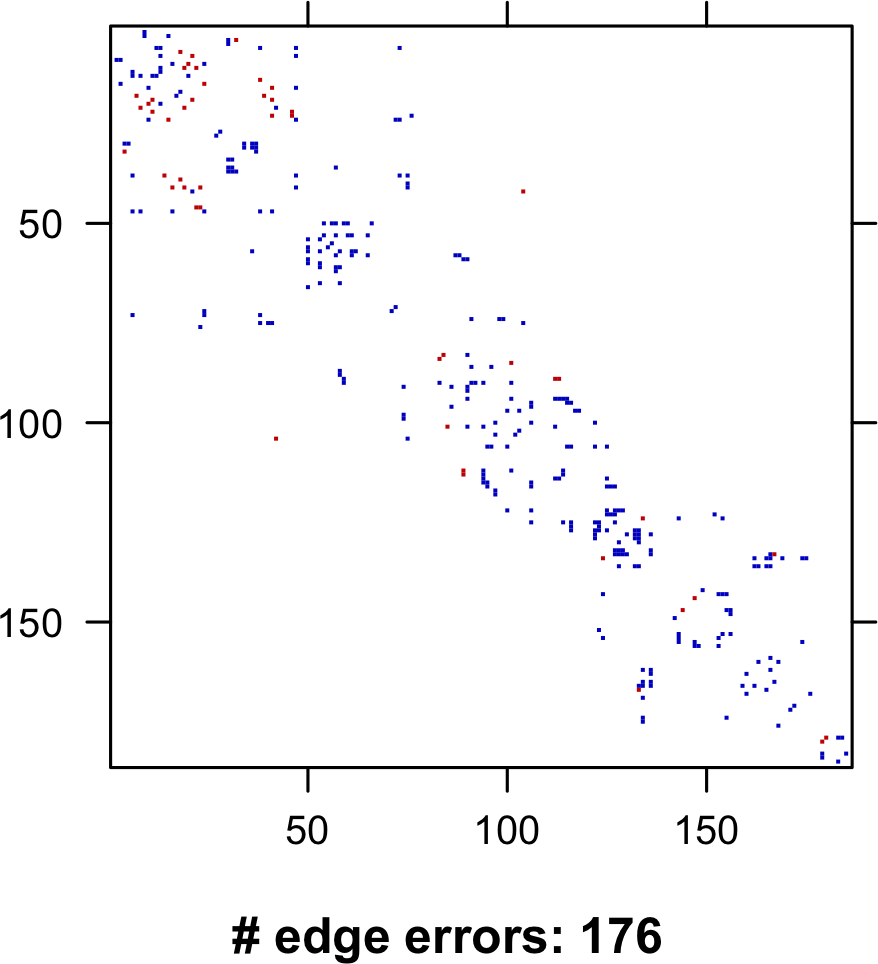}
	\caption{match with fewest edge errors}
	\end{subfigure}\hfill
	\begin{subfigure}[t]{.3\textwidth}\vskip 0pt
	\includegraphics[width=\linewidth]{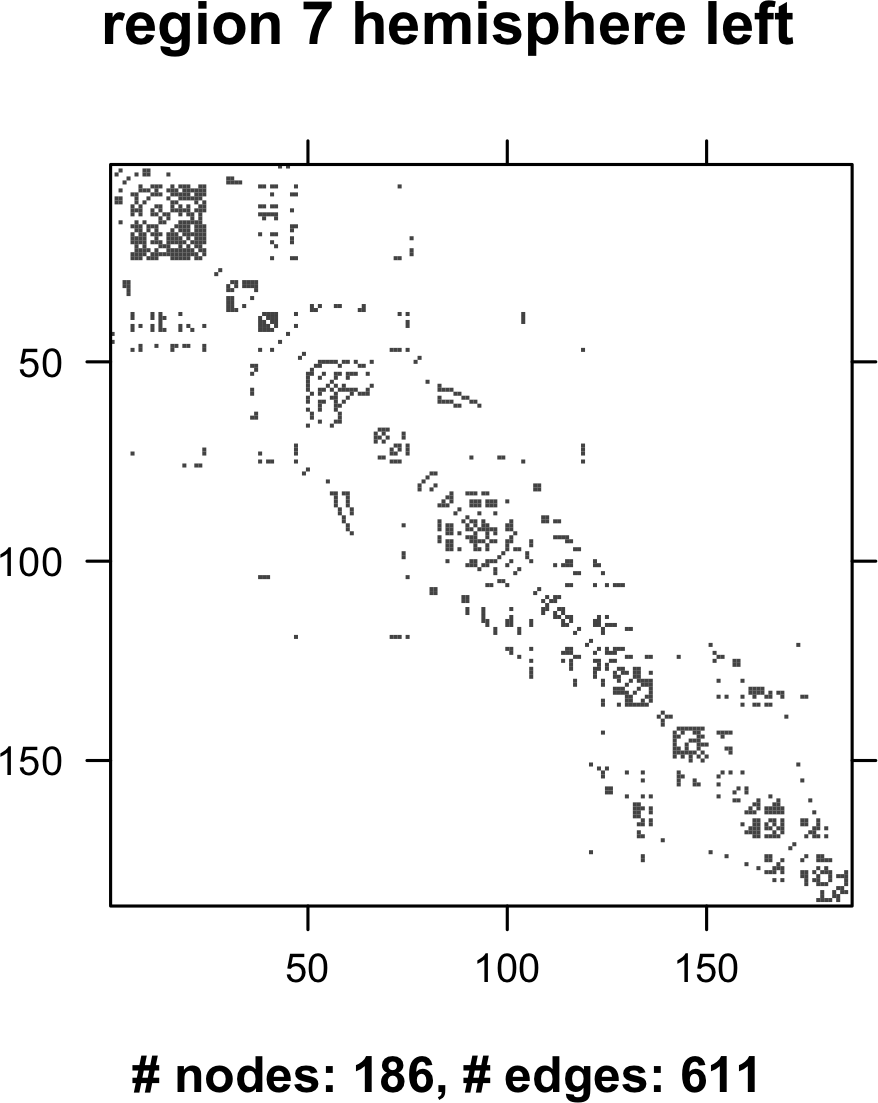}
	\caption{original graph
	}
	\end{subfigure}\hfill
	\begin{subfigure}[t]{.3\textwidth}\vskip 0pt
	\includegraphics[width=\linewidth]{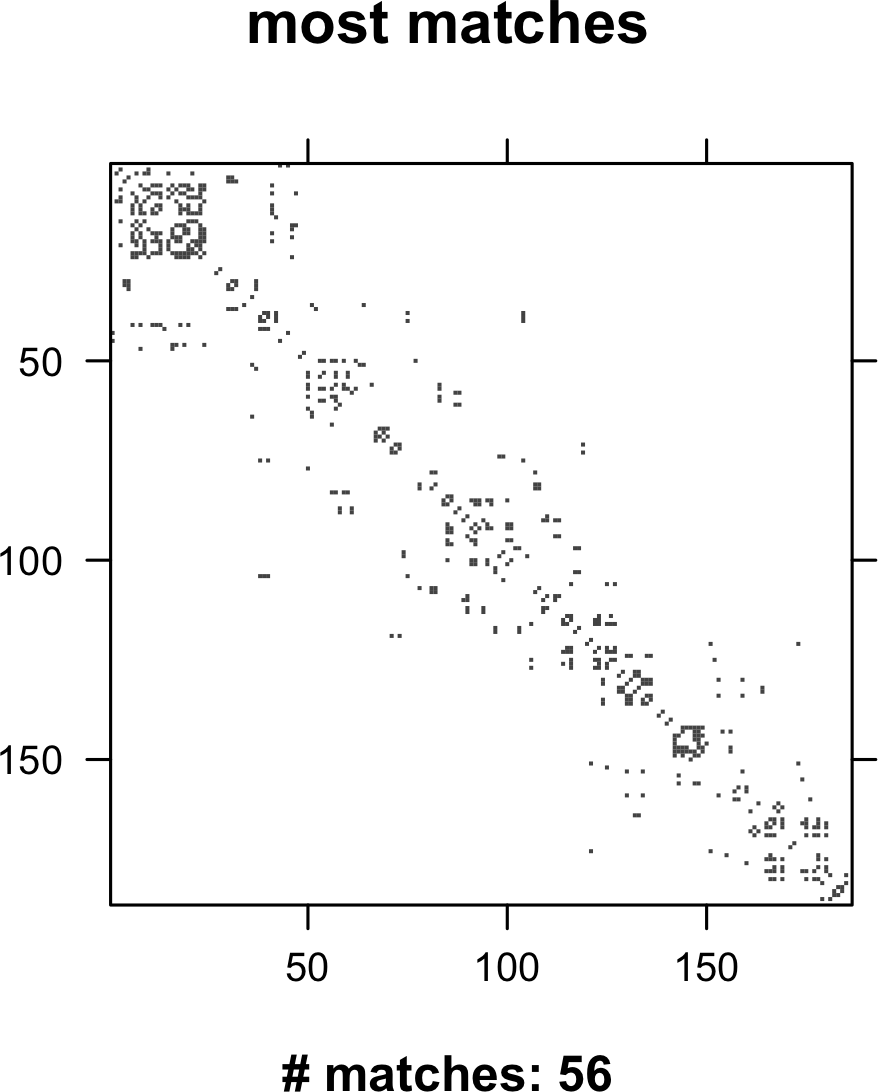}\\[3pt]
	\includegraphics[width=\linewidth]{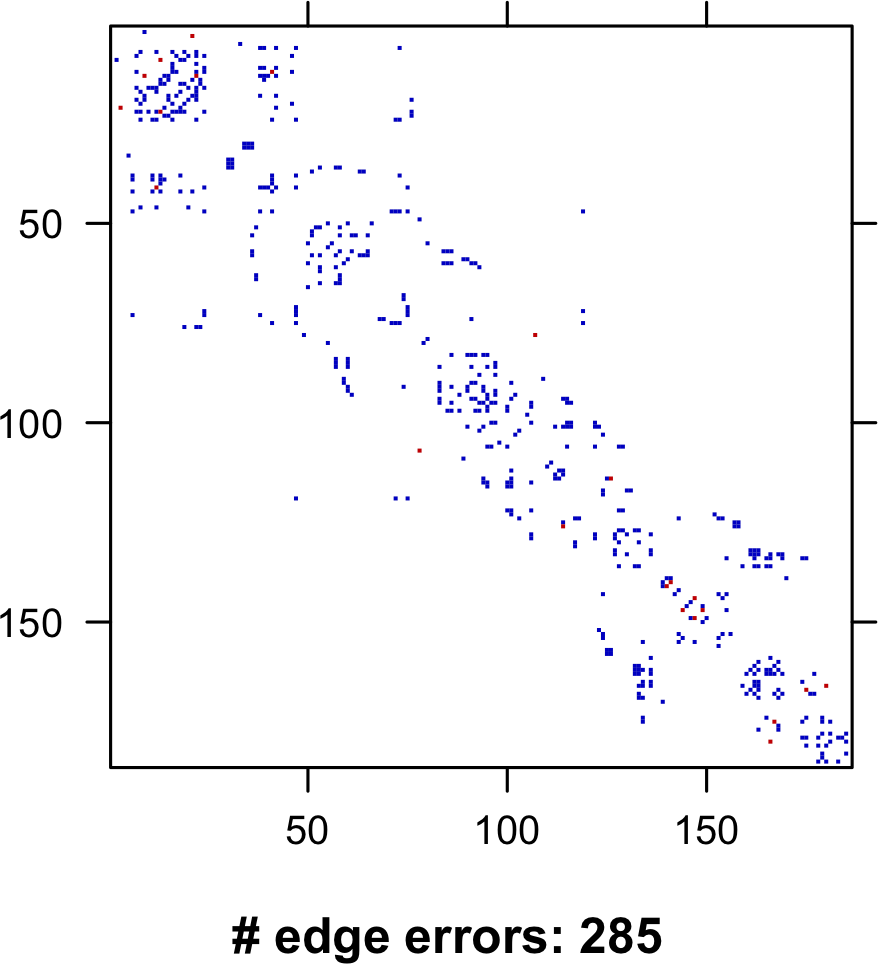}
	\caption{match with the most correct matches}
	\end{subfigure}
	\caption{Example matches from human DTMRI graphs.
	The top panels each are showing the adjacency matrices for graphs, with each row and column corresponding to nodes and dark pixels indicating an edge between them
	The bottom two panels show the difference between original graph in the center and the other two graphs.
	Blue pixels indicate edges missing in the matched graph and red pixels indicates extra edges in the matched graph.
	(a) The adjacency and difference matrices for the match with the lowest objective function among 1200 random restarts.
	(b) The graph corresponding to region 7 from the left hemisphere which we matched to the graph for the entire right hemisphere.
	(c) The adjacency and difference matrices for the match with the most correct matches among 1200 random restarts.}
	\label{fig:dt_example}
\end{figure*}

\section{Discussion}\label{sec:dis}

In this manuscript we have proposed a number of padding methods to transform the noisy subgraph detection problem into a graph matching problem. 
Each padding scheme emphasizes a different aspects of common graph structure.
Theoretically, the centered and oracle padding schemes are guaranteed to recover the original vertex correspondence in the correlated \ErdosRenyi{} statistical model, provided correlations and edge probabilities do not decay too quickly and the order of the smaller graph is at least logarithmic in the order of the larger graph.

By using a partially known vertex correspondence and random restarts, we are frequently able to recover the full correspondence accurately in a number of synthetic data settings and in a connectomics setting, using a gradient ascent approach on a relaxation of the original problem.
Importantly, we frequently observe a gap in the objective function value between a restart that achieves good performance and those with poor performance \cite{finke1987quadratic,Burkard1999-mz}.
If the size of the gap can be predicted, it can used to detect the fact that the correct alignment has been found rather than a spurious local minimum, and additionally can provide an adaptive stopping criterion for performing random restarts.

Rigorous results for the expected gap size in the correlated \ErdosRenyi{ model are currently unknown.
Two recent results for graphs with the same number of vertices do provide some hints.
\cite{Fishkind2018-sv} investigates the ratio of $\|A-B\|_F^2$ and $\Ex_P[\|A-PBP\|^T]$, where the expectation is take with respect to a uniform distribution over all permutation matrices.
The authors show that this quantity will converge to $1-\rho_T$, where $\rho_T$ is the ``total correlation'' of the graph pair, incorporating the correlation of the edge probabilities and the correlation of the edge presence random variables.
This result implies that $\|A-B\|_F^2 \approx (1-\rho_T) \Ex_P[\|A-PBP^T\|F^2]$.
Of course, the error for a local optimum will not necessarily be close to the error for a random permutation, so this result is not directly applicable.
Another related result is in~\cite{Fang2018-er}, which shows that under certain conditions there will be no local optimum which correctly align more than $\theta(\sqrt{n})$ vertices.
This gap in the number of aligned vertices presumably leads to a gap in the objective function.
Another related avenue is that for quadratic assignment programs for independent homogeneous \ErdosRenyi graph, the deviation of the objective function at the solution compared to the objective at any fixed permutation has been well studied \cite{Burkard1999-mz,finke1987quadratic}.
Further explorations along these lines are active areas of research.}

As the best objective function does not always correspond to the best matching performance, it can be difficult to evaluate performance when ground truth is not present. 
One way to quantify uncertainty is to study the proportion of times that two nodes are matched across graphs among the random restarts. 
By essentially taking the average of the permutations, one could determine which matches are most probable and which have higher uncertainty.
We demonstrate this idea in Figure~\ref{fig:kc_matched_pair} where we see patterns of matches indicating that certain groups of vertices are more similar within the set of K-cells.

We have assumed that the induced subgraphs are identically distributed and positively correlated.
These assumptions are important for our proof.
Indeed, if negative correlation is allowed, the objective functions are seeking the wrong solution.
Non-identical graphs can be more easily dealt with by using separate oracle estimates, as was done in the random dot product and Drosophila connectome examples.
Further theoretical issues such as non-independent edge graph may also be possible to deal with by exploiting the possibility that the graph pair is constructed out of a still large number of independent Bernoulli trials, none of which can drastically change graph structure.
Such an assumption may allow for the application of Proposition~\ref{prop:kim} as in our proofs below.

The general question of which pairs of graphs are matchable, by which we mean the ``true'' correspondence is recoverable, remains open.
One of the key aspects that make it possible in the correlated \ErdosRenyi{} model is that the noise introduced by the independent edges introduces asymmetries in the observed graphs, so that there are few or no automorphisms or even near automorphisms.
Hence, under the correlated setting, all but the true correspondence lead to an increase in the number of edge discrepancies.
For two arbitrary graphs, it is reasonable to expect that a similar conditions would be required, namely that there is a sufficient agreement between the graphs at the true correspondence, and that each graph is sufficiently asymmetric so that no other correspondence also has a small number of edge disagreements.

\clearpage
% scaling?

% alternative schemes centering. don't center then center?

\section*{Acknowledgments}
This material is based on research sponsored by the Air Force Research Laboratory and DARPA under agreement number FA8750-18-2-0066. The U.S. Government is authorized to reproduce and distribute reprints for Governmental purposes notwithstanding any copyright notation thereon.  The views and conclusions contained herein are those of the authors and should not be interpreted as necessarily representing the official policies or endorsements, either expressed or implied, of the Air Force Research Laboratory and DARPA or the U.S. Government.
Vince Lyzinski also gratefully acknowledge the support of NIH grant BRAIN U01-NS108637
This work is also partially supported by a grant from MIT
Lincoln Labs.
\bibliographystyle{IEEEtran}
\bibliography{../biblio.bib}

% Generated by IEEEtran.bst, version: 1.14 (2015/08/26)
\begin{thebibliography}{1}
\providecommand{\url}[1]{#1}
\csname url@samestyle\endcsname
\providecommand{\newblock}{\relax}
\providecommand{\bibinfo}[2]{#2}
\providecommand{\BIBentrySTDinterwordspacing}{\spaceskip=0pt\relax}
\providecommand{\BIBentryALTinterwordstretchfactor}{4}
\providecommand{\BIBentryALTinterwordspacing}{\spaceskip=\fontdimen2\font plus
\BIBentryALTinterwordstretchfactor\fontdimen3\font minus
  \fontdimen4\font\relax}
\providecommand{\BIBforeignlanguage}[2]{{%
\expandafter\ifx\csname l@#1\endcsname\relax
\typeout{** WARNING: IEEEtran.bst: No hyphenation pattern has been}%
\typeout{** loaded for the language `#1'. Using the pattern for}%
\typeout{** the default language instead.}%
\else
\language=\csname l@#1\endcsname
\fi
#2}}
\providecommand{\BIBdecl}{\relax}
\BIBdecl

\bibitem{kim}
J.~H. Kim, B.~Sudakov, and V.~H. Vu, ``On the asymmetry of random regular
  graphs and random graphs,'' \emph{Random Structures and Algorithms}, vol.~21,
  pp. 216--224, 2002.

\bibitem{arratia1989tutorial}
R.~Arratia and L.~Gordon, ``\BIBforeignlanguage{en}{Tutorial on large
  deviations for the binomial distribution},''
  \emph{\BIBforeignlanguage{en}{Bulletin of mathematical biology}}, vol.~51,
  no.~1, pp. 125--131, 1989.

\end{thebibliography}


% Generated by IEEEtran.bst, version: 1.14 (2015/08/26)
\begin{thebibliography}{10}
\providecommand{\url}[1]{#1}
\csname url@samestyle\endcsname
\providecommand{\newblock}{\relax}
\providecommand{\bibinfo}[2]{#2}
\providecommand{\BIBentrySTDinterwordspacing}{\spaceskip=0pt\relax}
\providecommand{\BIBentryALTinterwordstretchfactor}{4}
\providecommand{\BIBentryALTinterwordspacing}{\spaceskip=\fontdimen2\font plus
\BIBentryALTinterwordstretchfactor\fontdimen3\font minus
  \fontdimen4\font\relax}
\providecommand{\BIBforeignlanguage}[2]{{%
\expandafter\ifx\csname l@#1\endcsname\relax
\typeout{** WARNING: IEEEtran.bst: No hyphenation pattern has been}%
\typeout{** loaded for the language `#1'. Using the pattern for}%
\typeout{** the default language instead.}%
\else
\language=\csname l@#1\endcsname
\fi
#2}}
\providecommand{\BIBdecl}{\relax}
\BIBdecl

\bibitem{Carletti2018-ea}
V.~Carletti, P.~Foggia, A.~Saggese, and M.~Vento,
  ``\BIBforeignlanguage{en}{Challenging the time complexity of exact subgraph
  isomorphism for huge and dense graphs with {VF3}},''
  \emph{\BIBforeignlanguage{en}{IEEE transactions on pattern analysis and
  machine intelligence}}, vol.~40, no.~4, pp. 804--818, Apr. 2018.

\bibitem{Kuramochi2001-ll}
M.~Kuramochi and G.~Karypis, ``Frequent subgraph discovery,'' in
  \emph{Proceedings 2001 {IEEE} International Conference on Data Mining}, 2001,
  pp. 313--320.

\bibitem{Slota2013-zg}
G.~M. Slota and K.~Madduri, ``Fast approximate subgraph counting and
  enumeration,'' in \emph{2013 42nd International Conference on Parallel
  Processing}, Oct. 2013, pp. 210--219.

\bibitem{Cordella2004-nc}
L.~P. Cordella, P.~Foggia, C.~Sansone, and M.~Vento,
  ``\BIBforeignlanguage{en}{A (sub)graph isomorphism algorithm for matching
  large graphs},'' \emph{\BIBforeignlanguage{en}{IEEE transactions on pattern
  analysis and machine intelligence}}, vol.~26, no.~10, pp. 1367--1372, Oct.
  2004.

\bibitem{Schmidt2009-nf}
M.~C. Schmidt, N.~F. Samatova, K.~Thomas, and B.-H. Park, ``A scalable,
  parallel algorithm for maximal clique enumeration,'' \emph{Journal of
  parallel and distributed computing}, vol.~69, no.~4, pp. 417--428, Apr. 2009.

\bibitem{Bomze1999-bx}
I.~M. Bomze, M.~Budinich, P.~M. Pardalos, and M.~Pelillo, ``The maximum clique
  problem,'' in \emph{Handbook of Combinatorial Optimization: Supplement Volume
  A}, D.-Z. Du and P.~M. Pardalos, Eds.\hskip 1em plus 0.5em minus 0.4em\relax
  Boston, MA: Springer US, 1999, pp. 1--74.

\bibitem{Ullmann1976-ir}
J.~R. Ullmann, ``An algorithm for subgraph isomorphism,'' \emph{Journal of the
  ACM}, vol.~23, no.~1, pp. 31--42, Jan. 1976.

\bibitem{ConteReview}
D.~Conte, P.~Foggia, C.~Sansone, and M.~Vento, ``Thirty years of graph matching
  in pattern recognition,'' \emph{International Journal of Pattern Recognition
  and Artificial Intelligence}, vol.~18, no.~03, pp. 265--298, 2004.

\bibitem{foggia2014graph}
P.~Foggia, G.~Percannella, and M.~Vento, ``Graph matching and learning in
  pattern recognition in the last 10 years,'' \emph{International Journal of
  Pattern Recognition and Artificial Intelligence}, vol.~28, no.~01, p.
  1450001, 2014.

\bibitem{Emmert-Streib2016-st}
F.~Emmert-Streib, M.~Dehmer, and Y.~Shi, ``Fifty years of graph matching,
  network alignment and network comparison,'' \emph{Information sciences}, vol.
  346--347, pp. 180--197, 2016.

\bibitem{lyzinski_spectral}
V.~Lyzinski, D.~L. Sussman, D.~E. Fishkind, H.~Pao, L.~Chen, J.~T. Vogelstein,
  Y.~Park, and C.~E. Priebe, ``Spectral clustering for divide-and-conquer graph
  matching,'' \emph{Parallel Computing}, vol.~47, pp. 70--87, 2015.

\bibitem{yartseva2013performance}
L.~Yartseva and M.~Grossglauser, ``On the performance of percolation graph
  matching,'' in \emph{Proceedings of the first ACM conference on Online social
  networks}.\hskip 1em plus 0.5em minus 0.4em\relax ACM, 2013, pp. 119--130.

\bibitem{Akoglu2015-xe}
L.~Akoglu, H.~Tong, and D.~Koutra, ``\BIBforeignlanguage{en}{Graph based
  anomaly detection and description: a survey},''
  \emph{\BIBforeignlanguage{en}{Data mining and knowledge discovery}}, vol.~29,
  no.~3, pp. 626--688, May 2015.

\bibitem{FAP}
D.~Fishkind, S.~Adali, H.~G. Patsolic, L.~Meng, V.~Lyzinski, and C.~Priebe,
  ``Seeded graph matching,'' \emph{arXiv preprint arXiv:1209.0367}, 2017.

\bibitem{FAQ}
J.~T. {Vogelstein}, J.~M. {Conroy}, V.~{Lyzinski}, L.~J. {Podrazik}, S.~G.
  {Kratzer}, E.~T. {Harley}, D.~E. {Fishkind}, R.~J. {Vogelstein}, and C.~E.
  {Priebe}, ``{Fast Approximate Quadratic Programming for Graph Matching},''
  \emph{PLoS ONE}, vol.~10, no.~04, 2014.

\bibitem{Lyzinski2016-kp}
V.~Lyzinski, D.~E. Fishkind, M.~Fiori, J.~T. Vogelstein, C.~E. Priebe, and
  G.~Sapiro, ``Graph matching: Relax at your own risk,'' \emph{IEEE
  transactions on pattern analysis and machine intelligence}, vol.~38, no.~1,
  pp. 60--73, 2016.

\bibitem{Samsi2017-if}
S.~Samsi, V.~Gadepally, M.~Hurley, M.~Jones, E.~Kao, S.~Mohindra,
  P.~Monticciolo, A.~Reuther, S.~Smith, W.~Song, D.~Staheli, and J.~Kepner,
  ``Static graph challenge: Subgraph isomorphism,'' in \emph{2017 {IEEE} High
  Performance Extreme Computing Conference ({HPEC})}, 2017, pp. 1--6.

\bibitem{Caelli2004-nd}
T.~Caelli and S.~Kosinov, ``\BIBforeignlanguage{en}{An eigenspace projection
  clustering method for inexact graph matching},''
  \emph{\BIBforeignlanguage{en}{IEEE transactions on pattern analysis and
  machine intelligence}}, vol.~26, no.~4, pp. 515--519, Apr. 2004.

\bibitem{Dutta2017-qg}
A.~Dutta, J.~Llad{\'o}s, H.~Bunke, and U.~Pal, ``Product graph-based higher
  order contextual similarities for inexact subgraph matching,'' Feb. 2017.

\bibitem{Carletti2016-yn}
V.~Carletti, ``Exact and inexact methods for graph similarity in structural
  pattern recognition {PhD} thesis of vincenzo carletti,'' Ph.D. dissertation,
  Universit{\'e} de Caen; Universita degli studi di Salerno, 2016.

\bibitem{Ghahraman1980-pg}
D.~E. Ghahraman, A.~K.~C. Wong, and T.~Au, ``Graph optimal monomorphism
  algorithms,'' \emph{IEEE transactions on systems, man, and cybernetics},
  vol.~10, no.~4, pp. 181--188, Apr. 1980.

\bibitem{zaslavskiy2009path}
M.~Zaslavskiy, F.~Bach, and J.-P. Vert, ``A path following algorithm for the
  graph matching problem,'' \emph{IEEE Transactions on Pattern Analysis and
  Machine Intelligence}, vol.~31, no.~12, pp. 2227--2242, 2009.

\bibitem{jovo}
M.~Fiori, P.~Sprechmann, J.~Vogelstein, P.~Musé, and G.~Sapiro, ``Robust
  multimodal graph matching: Sparse coding meets graph matching,''
  \emph{Advances in Neural Information Processing Systems}, pp. 127--135, 2013.

\bibitem{Egozi2013-jh}
A.~Egozi, Y.~Keller, and H.~Guterman, ``\BIBforeignlanguage{en}{A probabilistic
  approach to spectral graph matching},'' \emph{\BIBforeignlanguage{en}{IEEE
  transactions on pattern analysis and machine intelligence}}, vol.~35, no.~1,
  pp. 18--27, Jan. 2013.

\bibitem{alex}
Y.~Aflalo, A.~Bronstein, and R.~Kimmel, ``On convex relaxation of graph
  isomorphism,'' \emph{Proceedings of the National Academy of Sciences}, vol.
  112, no.~10, pp. 2942--2947, 2015.

\bibitem{JMLR:v15:lyzinski14a}
V.~Lyzinski, D.~E. Fishkind, and C.~E. Priebe, ``Seeded graph matching for
  correlated {E}rdos-{R}enyi graphs,'' \emph{Journal of Machine Learning
  Research}, vol.~15, pp. 3513--3540, 2014.

\bibitem{Fang2018-er}
F.~Fang, D.~L. Sussman, and V.~Lyzinski, ``Tractable graph matching via soft
  seeding,'' \emph{arXiv preprint arXiv:1807.09299}, 2018.

\bibitem{Hoff2009-pj}
P.~D. Hoff, ``Multiplicative latent factor models for description and
  prediction of social networks,'' \emph{Computational \& Mathematical
  Organization Theory}, vol.~15, no.~4, pp. 261--272, 2009.

\bibitem{Hoff2002-lj}
P.~D. Hoff, A.~E. Raftery, and M.~S. Handcock, ``Latent space approaches to
  social network analysis,'' \emph{Journal of the American Statistical
  Association}, vol.~97, no. 460, pp. 1090--1098, 2002.

\bibitem{Durante2016-tk}
D.~Durante, D.~B. Dunson, and J.~T. Vogelstein, ``Nonparametric bayes modeling
  of populations of networks,'' \emph{Journal of the American Statistical
  Association}, pp. 1--15, Aug. 2016.

\bibitem{chatterjee2014matrix}
S.~Chatterjee, ``Matrix estimation by universal singular value thresholding,''
  \emph{The Annals of Statistics}, vol.~43, no.~1, pp. 177--214, 2014.

\bibitem{Davenport2014-tf}
M.~A. Davenport, Y.~Plan, E.~van~den Berg, and M.~Wootters, ``1-bit matrix
  completion,'' \emph{Information and Inference: A Journal of the IMA}, vol.~3,
  no.~3, pp. 189--223, Sep. 2014.

\bibitem{Bunke1997-bk}
H.~Bunke, ``On a relation between graph edit distance and maximum common
  subgraph,'' \emph{Pattern recognition letters}, vol.~18, no.~8, pp. 689--694,
  Aug. 1997.

\bibitem{lyzinski2017graph}
V.~Lyzinski and D.~L. Sussman, ``Matchability of heterogeneous networks
  pairs,'' \emph{arXiv preprint, arXiv:1705.02294}, 2017.

\bibitem{Lovasz2012-df}
L.~Lov{\'a}sz, \emph{Large networks and graph limits}.\hskip 1em plus 0.5em
  minus 0.4em\relax American Mathematical Society Providence, 2012, vol.~60.

\bibitem{alon1995color}
N.~Alon, R.~Yuster, and U.~Zwick, ``Color-coding,'' \emph{Journal of the ACM
  (JACM)}, vol.~42, no.~4, pp. 844--856, 1995.

\bibitem{Munkres1957-ny}
J.~Munkres, ``Algorithms for the assignment and transportation problems,''
  \emph{Journal of the Society for Industrial and Applied Mathematics}, vol.~5,
  no.~1, pp. 32--38, 1957.

\bibitem{young2007random}
S.~Young and E.~Scheinerman, ``Random dot product graph models for social
  networks,'' in \emph{Proceedings of the 5th international conference on
  algorithms and models for the web-graph}, 2007, pp. 138--149.

\bibitem{nickel2006random}
C.~L.~M. Nickel, ``Random dot product graphs: A model for social networks,''
  Ph.D. dissertation, Johns Hopkins University, 2006.

\bibitem{eichler2017complete}
K.~Eichler, F.~Li, A.~Litwin-Kumar, Y.~Park, I.~Andrade, C.~M.
  Schneider-Mizell, T.~Saumweber, A.~Huser, C.~Eschbach, B.~Gerber
  \emph{et~al.}, ``The complete connectome of a learning and memory centre in
  an insect brain,'' \emph{Nature}, vol. 548, no. 7666, 2017.

\bibitem{Kiar2018-ti}
G.~Kiar, E.~Bridgeford, W.~G. Roncal, V.~Chandrashekhar, D.~Mhembere, R.~Burns,
  and J.~Vogelstein, ``neurodata/ndmg: Stable {ndmg-DWI} pipeline release,''
  Jan. 2018.

\bibitem{Kiar2018-fl}
G.~Kiar, E.~Bridgeford, W.~G. Roncal, {Consortium for Reliability and
  Reproducibliity (CoRR)}, V.~Chandrashekhar, D.~Mhembere, S.~Ryman, X.-N. Zuo,
  D.~S. Marguiles, R.~Cameron~Craddock, C.~E. Priebe, R.~Jung, V.~Calhoun,
  B.~Caffo, R.~Burns, M.~P. Milham, and J.~Vogelstein,
  ``\BIBforeignlanguage{en}{A {High-Throughput} pipeline identifies robust
  connectomes but troublesome variability},'' Apr. 2018.

\bibitem{finke1987quadratic}
G.~Finke, R.~E. Burkard, and F.~Rendl, ``Quadratic assignment problems,''
  \emph{North-Holland Mathematics Studies}, vol. 132, pp. 61--82, 1987.

\bibitem{Burkard1999-mz}
R.~E. Burkard, E.~{\c C}ela, P.~M. Pardalos, and L.~S. Pitsoulis, ``The
  quadratic assignment problem,'' in \emph{Handbook of Combinatorial
  Optimization: Volume1--3}, D.-Z. Du and P.~M. Pardalos, Eds.\hskip 1em plus
  0.5em minus 0.4em\relax Boston, MA: Springer US, 1999, pp. 1713--1809.

\bibitem{Fishkind2018-sv}
D.~E. Fishkind, L.~Meng, A.~Sun, C.~E. Priebe, and V.~Lyzinski, ``Alignment
  strength and correlation for graphs,'' Aug. 2018.

\end{thebibliography}

\clearpage
\appendix

\section{Auxiliary Results}

We use the following McDiarmid-like concentration result in the proof of each of our theorems.
\begin{proposition}[Proposition 3.2 from~\citeapp{kim}]\label{prop:kim}
Let $X_1,\dotsc,X_m$ be a sequence of independent Bernoulli random variables where $\Ex[X_i]=p_i$. 
Let $f:\{0,1\}^m\mapsto \Re$ be such that changing any $X_i$ to $1-X_i$ changes $f$ by at most 
\[ 
     \begin{split}M=\sup_i \sup_{X_1,\dotsc,X_n}
    |&f(X_1,\dotsc,X_i,\dotsc,X_n) \\&- f(X_1,\dotsc,1-X_i,\dotsc,X_n)|
    \end{split}.
\]
Let $\sigma^2 = M^2 \sum_i p_i(1-p_i)$ and let $Y=f(X_1,\dotsc,X_n)$.

Then $$\Pr[|Y-\Ex[Y]| \geq t \sigma ] \leq 2 e^{-t^2/4}$$ for all $0<t<2\sigma/M$.
\end{proposition}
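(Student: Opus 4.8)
The plan is to prove this via the standard Doob-martingale / Chernoff route, with the one crucial refinement that the per-step moment generating function (MGF) bound must retain the Bernoulli variances $p_i(1-p_i)$ rather than merely the worst-case fluctuation $M$. Concretely, set $\mathcal{F}_k = \sigma(X_1,\dots,X_k)$ and form the Doob martingale $Y_k = \Ex[Y \mid \mathcal{F}_k]$, so that $Y_0 = \Ex[Y]$ and $Y_m = Y$, with differences $D_k = Y_k - Y_{k-1}$. The target, from which everything follows, is the one-step estimate $\Ex[e^{\lambda D_k}\mid \mathcal{F}_{k-1}] \le \exp(\lambda^2 M^2 p_k(1-p_k))$, valid whenever $\lambda M < 1$. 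Telescoping the conditional expectations then yields $\Ex[e^{\lambda(Y-\Ex Y)}] \le \exp(\lambda^2 \sigma^2)$, and a Chernoff bound finishes the argument.

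First I would pin down the exact law of each difference $D_k$. Writing $g_k(x) = \Ex[Y \mid \mathcal{F}_{k-1}, X_k = x]$ and using independence of the $X_i$, one has $Y_k = g_k(X_k)$ and $Y_{k-1} = p_k g_k(1) + (1-p_k) g_k(0)$. Hence $D_k$ equals $(1-p_k)\delta_k$ on $\{X_k=1\}$ and $-p_k\delta_k$ on $\{X_k=0\}$, where $\delta_k = g_k(1)-g_k(0)$. The bounded-difference hypothesis passes through the conditional expectation, since for every fixed setting of the remaining coordinates flipping $X_k$ moves $f$ by at most $M$; therefore $|\delta_k|\le M$. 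This makes $D_k$ a centered two-point random variable with $D_k \le M$ and conditional second moment $\Ex[D_k^2\mid\mathcal{F}_{k-1}] = p_k(1-p_k)\delta_k^2 \le M^2 p_k(1-p_k)$.

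The heart of the proof is this one-step MGF bound, and it is the step I expect to be the main obstacle, because it is where the variance---not merely the range---must be extracted. I would invoke the convexity lemma that for any mean-zero $W$ with $W\le b$, the ratio $x^{-2}(e^{\lambda x}-1-\lambda x)$ is increasing in $x$, so that $e^{\lambda x}-1-\lambda x \le x^2 b^{-2}(e^{\lambda b}-1-\lambda b)$ for all $x\le b$; taking conditional expectations and using $1+u\le e^u$ gives $\Ex[e^{\lambda D_k}\mid\mathcal{F}_{k-1}] \le \exp(p_k(1-p_k)(e^{\lambda M}-1-\lambda M))$. It then remains to check $e^{u}-1-u \le u^2$ for $u = \lambda M \in (0,1)$, which holds since $e^u-1-u \le \tfrac{u^2}{2}(1-u/3)^{-1} < u^2$ on that interval; this delivers exactly the target per-step estimate.

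Finally I would assemble the pieces. Telescoping gives $\Ex[e^{\lambda(Y-\Ex Y)}]\le \exp(\lambda^2\sigma^2)$, so $\Pr[Y-\Ex Y \ge s] \le \exp(-\lambda s + \lambda^2\sigma^2)$, and optimizing at $\lambda = s/(2\sigma^2)$ yields $\exp(-s^2/(4\sigma^2))$. Applying the same argument to $-f$, which shares the same $M$ and the same $\sigma$, controls the lower tail, and a union bound produces $\Pr[|Y-\Ex Y|\ge s]\le 2\exp(-s^2/(4\sigma^2))$. Substituting $s=t\sigma$ recovers the stated $2e^{-t^2/4}$, and the admissibility constraint $\lambda M = tM/(2\sigma) < 1$ is precisely the range restriction $t < 2\sigma/M$ under which the one-step MGF bound was established.
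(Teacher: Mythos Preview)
The paper does not supply its own proof of this proposition: it is quoted verbatim as an auxiliary result from the cited reference and is invoked as a black box in the proofs of Theorems~\ref{thm:naive}--\ref{thm:oracle}. So there is no in-paper argument to compare against.

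That said, your proposed proof is correct and is essentially the standard route to this Bernstein-flavored refinement of McDiarmid's inequality. The two places where care is genuinely required both go through. First, the martingale difference $D_k$ is indeed, conditional on $\mathcal{F}_{k-1}$, a centered two-point variable with $|D_k|\le M$ and conditional second moment $p_k(1-p_k)\delta_k^2\le M^2 p_k(1-p_k)$; this uses independence of the $X_i$ exactly as you say, and $|\delta_k|\le M$ follows because the pointwise bound on the flip passes inside the conditional expectation. Second, the Bennett-type one-step MGF bound is valid because $u\mapsto (e^u-1-u)/u^2$ is increasing on all of $\Re$, so for $\lambda>0$ and any $x\le M$ one has $e^{\lambda x}-1-\lambda x\le x^2 M^{-2}(e^{\lambda M}-1-\lambda M)$; taking conditional expectations and then using $e^{u}-1-u\le u^2$ on $(0,1)$ (which you verify) gives precisely $\Ex[e^{\lambda D_k}\mid\mathcal{F}_{k-1}]\le\exp(\lambda^2 M^2 p_k(1-p_k))$ whenever $\lambda M<1$. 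The telescoping, Chernoff optimization at $\lambda=t/(2\sigma)$, symmetry via $-f$, and the identification of the admissibility range $t<2\sigma/M$ are all routine and correctly executed.
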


The following proposition characterizes how a pair of correlated Bernoulli random variables can be generated from a set of three independent random variables.
Along with a bound on the variances of these three Bernoullis this allows for the use of Proposition~\ref{prop:kim}.

\begin{proposition}
\label{prop2}
If $X,Y$ are marginally Bernoulli random variables with parameter $\Lambda$ and correlation $\rho$, then the bivariate Bernoulli distribution of $(X,Y)$ is given by
\begin{center}
  \begin{tabular}{c|c|c|c}
    
    $(X,Y) $ & $X=1$ & $X=0$& \textup{Total}\\
    \hline
    $Y=1$ & $\Lambda[\Lambda+\rho(1-\Lambda)]$ &  $(1-\Lambda)\Lambda(1-\rho)$ &  $\Lambda$\\
    \hline
    $Y=0$ & $\Lambda(1-\rho)(1-\Lambda)$ & $(1-\Lambda+\Lambda \rho)(1-\Lambda)$& $1-\Lambda$\\
    \hline
    \textup{Total} & $\Lambda$ & $1-\Lambda$ & 1\\
  \end{tabular}
\end{center}

Let $Z_0, Z_1$ and $Z_2$ be independent Bernoulli random variables with $Z_0 \sim \mathrm{Bern}(\Lambda)$, $Z_1\sim \mathrm{Bern} \left(\Lambda(1-\rho)\right)$ and $Z_2\sim \mathrm{Bern}\left(\mathrm{\Lambda+\rho(1-\Lambda)}\right)$. Then $$(X,Y) \overset{d}{\sim} (Z_0, (1-Z_0)Z_1+Z_0Z_2).$$ 
It also holds that
\begin{align*}
\mathrm{Var}(Z_{0})+\mathrm{Var}(Z_{1})+ \mathrm{Var}(Z_{2}) % &\\
% &=\Lambda(1-\Lambda)+\Lambda(1-\rho)(1-\Lambda(1-\rho))+\left[\Lambda+\rho(1-\Lambda)     \right]\left[1-\Lambda-\rho(1-\Lambda) \right]\\
% &\leq \Lambda(1-\Lambda)+\Lambda(1-\Lambda+\Lambda\rho)+(\Lambda+\rho)(1-\Lambda)\\
& \leq 3\Lambda(1-\Lambda)+\rho(1-\rho) \\
% \end{align*}    
% \begin{align*}
\mathrm{Var}(Z_{0})+\mathrm{Var}(Z_{1})+ \mathrm{Var}(Z_{2}) % &\\
 % & \geq (1-\rho)\Lambda(1-\Lambda)+\Lambda(1-\rho)(1-\Lambda)+\Lambda(1-\Lambda-\rho(1-\Lambda))\\
& \geq 3(1-\rho)\Lambda(1-\Lambda).
\end{align*} 
\end{proposition}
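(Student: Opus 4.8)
The plan is to verify the three assertions in turn by direct computation; only the last (the lower variance bound) requires any care. For the bivariate table, note that $X$ and $Y$ are each $\mathrm{Bernoulli}(\Lambda)$ with variance $\Lambda(1-\Lambda)$, so the prescribed correlation forces $\Cov(X,Y)=\rho\,\Lambda(1-\Lambda)$ and hence $\Pr[X=1,Y=1]=\Ex[XY]=\Cov(X,Y)+\Lambda^2=\Lambda[\Lambda+\rho(1-\Lambda)]$. The remaining three cells follow by subtracting this from the known marginals $\Pr[X=1]=\Pr[Y=1]=\Lambda$; e.g.\ $\Pr[X=0,Y=1]=\Lambda-\Lambda[\Lambda+\rho(1-\Lambda)]=\Lambda(1-\rho)(1-\Lambda)$, matching the displayed table.

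For the decomposition, write $W=(1-Z_0)Z_1+Z_0Z_2$. The first coordinate $Z_0\sim\mathrm{Bernoulli}(\Lambda)$ already reproduces the $X$-marginal, so it suffices to match two joint probabilities. Conditioning on $Z_0$: on the event $\{Z_0=1\}$ we have $W=Z_2\sim\mathrm{Bernoulli}(\Lambda+\rho(1-\Lambda))$, giving $\Pr[Z_0=1,W=1]=\Lambda[\Lambda+\rho(1-\Lambda)]$, and on $\{Z_0=0\}$ we have $W=Z_1\sim\mathrm{Bernoulli}(\Lambda(1-\rho))$, giving $\Pr[Z_0=0,W=1]=(1-\Lambda)\Lambda(1-\rho)$. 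These agree with the $(1,1)$ and $(0,1)$ entries of the table, and together with the $Z_0$-marginal they pin down the entire joint law, so $(Z_0,W)\overset{d}{=}(X,Y)$.

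For the variances, $\mathrm{Var}(Z_0)=\Lambda(1-\Lambda)$, $\mathrm{Var}(Z_1)=\Lambda(1-\rho)\bigl(1-\Lambda(1-\rho)\bigr)$, and $\mathrm{Var}(Z_2)=(\Lambda+\rho(1-\Lambda))(1-\Lambda)(1-\rho)$, the last using $1-(\Lambda+\rho(1-\Lambda))=(1-\Lambda)(1-\rho)$. Adding the three, pulling $(1-\rho)$ out of $\mathrm{Var}(Z_1)+\mathrm{Var}(Z_2)$, and simplifying with $\Lambda^2+(1-\Lambda)^2=1-2\Lambda(1-\Lambda)$ collapses the sum to the clean identity
\[
\mathrm{Var}(Z_0)+\mathrm{Var}(Z_1)+\mathrm{Var}(Z_2)=\Lambda(1-\Lambda)\bigl(1+2(1-\rho)^2\bigr)+\rho(1-\rho).
\]
The upper bound is then immediate, since $(1-\rho)^2\le1$ forces $1+2(1-\rho)^2\le3$.

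The lower bound is the only nonroutine point, because the naive estimate $1+2(1-\rho)^2\ge3(1-\rho)$ is \emph{false} for $\rho\in(0,1/2)$, so the $\rho(1-\rho)$ term must be retained. Writing $t=1-\rho$ and collecting terms, the difference between the right-hand side of the identity and $3(1-\rho)\Lambda(1-\Lambda)$ equals $(1-t)\bigl(t-(2t-1)\Lambda(1-\Lambda)\bigr)$; since $1-t=\rho\ge0$ it suffices to show $t-(2t-1)\Lambda(1-\Lambda)\ge0$. When $t\le1/2$ this is clear because $-(2t-1)\Lambda(1-\Lambda)\ge0$; when $t>1/2$ one invokes $\Lambda(1-\Lambda)\le1/4$ to get $t-(2t-1)\Lambda(1-\Lambda)\ge t-(2t-1)/4=(2t+1)/4>0$. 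Thus the main (and essentially only) obstacle is spotting this factorization and remembering to use $\Lambda(1-\Lambda)\le1/4$; everything else is bookkeeping with the $2\times2$ table and Bernoulli variances.
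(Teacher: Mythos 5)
Your proof is correct and complete; the paper itself states Proposition~\ref{prop2} without proof, so there is nothing to compare against, and your direct verification (joint table from the covariance, conditioning on $Z_0$ to match the two cells $\Pr[W=1,Z_0=1]$ and $\Pr[W=1,Z_0=0]$, then the variance identity $\Lambda(1-\Lambda)(1+2(1-\rho)^2)+\rho(1-\rho)$) is exactly the intended routine computation. One simplification worth noting: the lower bound, which you flag as the only nonroutine step, actually follows term by term without any factorization, since $\mathrm{Var}(Z_0)=\Lambda(1-\Lambda)\geq(1-\rho)\Lambda(1-\Lambda)$, $\mathrm{Var}(Z_1)=\Lambda(1-\rho)\bigl(1-\Lambda(1-\rho)\bigr)\geq\Lambda(1-\rho)(1-\Lambda)$, and $\mathrm{Var}(Z_2)=\bigl(\Lambda+\rho(1-\Lambda)\bigr)(1-\Lambda)(1-\rho)\geq\Lambda(1-\Lambda)(1-\rho)$, so each of the three variances already dominates $(1-\rho)\Lambda(1-\Lambda)$ individually.
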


\section{Proofs of Main Results}\label{app:proof}
In this section we prove our three main theorems.

For convenience, we recall the main variables for each theorem.
Let $[n]=\{1,2,\dotsc,n\}$.
Let $\PM_n$ and $\DSM_n$ denote the set of $n\times n$ permutation matrices and doubly stochastic matrices, respectively.
Let $\1_n$ and $\0_n$ denote the $n\times n$ all-ones and all-zeros matrices, respectively, and $\1_{nm}$ and $\0_{nm}$ for $n\times m$ rectangular versions.
Let $\AM_n$ denote the set of adjacency matrices corresponding to simple undirected graphs, i.e.\ the set of symmetric hollow $\{0,1\}$-valued $n\times n$ matrices.
Finally, let $\oplus$ denote the direct sum of matrices.

The number of vertices is $n$ in the large graph and $n_c$ in the small graph.
The matrix of correlations is $R \in [0,1]^{n_c \times n_c}$ and the matrix of edge probabilities is $\Lambda\in [0,1]^{n_c\times n_c}$ with the principal $n_c \times n_c$ submatrix given by $\Lambda^c$.
We observe $A,B \sim \CER(\Lambda, R)$ with $A\in  \AM_{n_c}$ and $B \in \AM_{n}$.

%!TEX root=./matched_filter.tex

\begin{proof}[Proof of Theorem~\ref{thm:naive}]
Let $Q\in\PM_n$ (with associated permutation $\tau$) map all vertices in the small graph to vertices without matches in the large graph, so that $k_j=|\{i\leq n_c: i\neq\tau(i)>n_c\}|=n_c$.
Let $\mathcal{E}_Q$ be the set of edges in $A$ permuted by $Q$, and note that $\mathcal{E}_Q=\binom{[n_c]}{2}.$

Let $X_Q=\tr(\tA\tB-\tA Q \tB Q^T)$.
We will construct an appropriate $\Lambda$ such that $\Pr[X_Q\geq 0] \leq \exp\{ - C \epsilon^2 (\log n_c)^2\}$, which implies the desired result. 

Note that for $u,v\in [n_c]$, $u\neq v$, $w,r\in[n]$, and $w\neq r$, we have 
$$\Ex(\tA_{u,v}\tB_{w,r})=\begin{cases}
\Lambda_{u,v}(\Lambda_{u,v}+(1-\Lambda_{u,v})R_{u,v})&\text{ if }\{u,v\}=\{w,r\},\\
\Lambda_{u,v}\Lambda_{w,r}&\text{ if }\{u,v\}\neq\{w,r\},
\end{cases}
$$
so that
\begin{align*}
\Ex X_Q&=4\left(\sum_{\{u,v\}\in\mathcal{E}_Q}\Lambda_{u,v}(\Lambda_{u,v}+(1-\Lambda_{u,v})R_{u,v})-\Lambda_{u,v}\Lambda_{\tau(u),\tau(v)} \right)\notag\\
&\leq4\left(\sum_{\{u,v\}\in\mathcal{E}_Q}\Lambda_{u,v}(\Lambda_{u,v}+(1-\Lambda_{u,v})\varrho)-\Lambda_{u,v}\Lambda_{\tau(u),\tau(v)} \right)\notag
% &=8\left(\sum_{\{u,v\}\in\mathcal{E}_Q}\Lambda_{u,v}(1-\Lambda_{\tau(u),\tau(v)})+\Lambda_{\tau(u),\tau(v)}(1-\Lambda_{u,v})-2\Lambda_{u,v}(1-\Lambda_{u,v})(1-R_{u,v}) \right).
\end{align*}
Letting each $\Lambda_{\tau(u),\tau(v)}>\beta+(1-\beta)\varrho+\epsilon$ be chosen to keep $\Lambda_{\tau(u),\tau(v)}\in(0,1)$ (which is possible by the assumption that $\beta$ and $\varrho$ are both strictly less than 1), we have
\begin{align*}
-\Ex X_Q&\geq4\left(\sum_{\{u,v\}\in\mathcal{E}_Q}\Lambda_{u,v}(\beta+(1-\beta)\varrho+\epsilon)-\Lambda_{u,v}(\Lambda_{u,v}+(1-\Lambda_{u,v})\varrho) \right)\\
&=\omega(\epsilon\, n_c\log n_c).
\end{align*}
Applying Proposition \ref{prop:kim} with $M=8$, $\sigma^2=\Theta(n_c^2)$, we have
\begin{align*}
\Pr\left[X_Q\geq 0  \right]&=\Pr\left[X_Q-\Ex X_Q\geq -\Ex X_Q  \right]\\
&\leq \Pr\left[|X_Q-\Ex X_Q|\geq C \epsilon\,n_c\log n_c  \right]\\
&\leq \exp\left\{-C(\epsilon \log n_c)^2 \right\},
\end{align*}
where $C$ is a positive constant that can vary line-to-line.
\end{proof}

%!TEX root=./matched_filter.tex

\begin{proof}[Proof of Theore~\ref{thm:center}]
Let $\GM=\{Q\in \PM: Q^c=I_{n_c}\}$, where $Q^c$ denotes the $n_c\times n_c$ principal submatrix of $Q$, denote the set of permutation which correctly match the $n_c$ core vertices.
Note that if $P\in \GM$, then
$\|\tA -\tB\|_F=\|\tA-P\tB P^T\|_F.$

Let $Q\notin\GM$ (with associated permutation $\tau$) satisfy
$k_j = |\{ i \leq n_c : i \neq \tau(i) > n_c \}|$ 
and 
$k_c = |\{i\leq n_c: i\neq \tau(i)\leq n_c|$,
so that $Q$ correctly matches the labels of $n_c - k_c - k_j$ of the core $n_c$ vertices across $A$ and $B$.
Define $X_Q=\frac{1}{2}\tr(\tA\tB-\tA Q\tB Q^T)$ and 
$$\mathcal{E}_Q=\left\{\{u,v\}\in\binom{[n_c]}{2}:\{\tau(u),\tau(v)\}\neq \{u,v\} \right\}.$$
Note that $|\mathcal{E}_Q|\geq\frac{(n_c-1)(k_c+k_j)}{2}$.
We then have
\begin{align*}
\Ex X_Q&=\frac{1}{2}\Ex\left(\tr(\tA^T \tB - \tA^T Q \tB Q^T)\right)\\
&=\sum_{\{u,v\}\in\mathcal{E}_Q}\Ex(A_{u,v}B_{u,v} - A_{u,v}B_{\tau(u),\tau(v)}) .
\end{align*}
For $u, v \in [n_c]$, $u \neq v$, $w,r\in[n]$, and $w\neq r$, we have 
$$\Ex(\tA_{u,v}\tB_{w,r})=\begin{cases}
(1-2\Lambda_{u,v})^2+4\Lambda_{u,v}(1-\Lambda_{u,v})R_{u,v}&\text{ if }\{u,v\}=\{w,r\},\\
(1-2\Lambda_{u,v})(1-2\Lambda_{w,r})&\text{ if }\{u,v\}\neq\{w,r\}.
\end{cases}
$$
Hence,
\begin{align*}
\Ex X_Q&=\sum_{\{u,v\}\in\mathcal{E}_Q}\bigg((1-2\Lambda_{u,v})2(\Lambda_{\tau(u),\tau(v)}-\Lambda_{u,v})+4\Lambda_{u,v}(1-\Lambda_{u,v})R_{u,v} \bigg).
\end{align*}
Letting $\gamma=\alpha(1-\alpha)$, under the assumptions of the theorem we have that
\begin{align}
\label{eq:pad_entrywise_bound}
    2&(1-2\Lambda_{u,v})2(\Lambda_{\tau(u),\tau(v)}-\Lambda_{u,v})+4\Lambda_{u,v}(1-\Lambda_{u,v})R_{u,v}\notag\\
    &\geq 2\Lambda_{\tau(u),\tau(v)}-2\Lambda_{u,v}-4\Lambda_{u,v}\Lambda_{\tau(u),\tau(v)}+4\Lambda_{u,v}^2+2\Lambda_{u,v}-2\Lambda_{u,v}^2+4\epsilon\gamma\notag\\
    &\geq 2\Lambda_{\tau(u),\tau(v)}^2+2\Lambda_{u,v}^2-4\Lambda_{u,v}\Lambda_{\tau(u),\tau(v)}+4\epsilon\gamma\geq4\epsilon\gamma,
\end{align}
so that
\begin{align*}
\Ex X_Q&\geq 2\epsilon (n_c-1)(k_c+k_j)\gamma
% 8\left(\sum_{\{u,v\}\in\mathcal{E}_Q}2\alpha(1-\beta)-\frac{1}{2}(1-R_{u,v}) \right)\geq4\epsilon n_c(k_c+k_j);
\end{align*}
Note that $X_Q$ is a function of
$$N_Q:=3\left( (n_c-k_c-k_j)k_c+\binom{k_c}{2}\right)+2\left( (n_c-k_j)k_j+\binom{k_j}{2} \right)\leq 3n_c(k_c+k_j)$$
independent Bernoulli random variables, and in the language of Proposition \ref{prop:kim}, we have that $M=8$ and $\sigma^2$ satisfies $\sigma^2\leq 3 n_c(k_c+k_j)$ and 
$$
\sigma^2 \geq \left(\binom{n_c}{2}-\binom{ n_c-k_c-k_j }{2}\right)\gamma$$
Setting 
$$
t= \frac{8\epsilon (n_c-1)(k_c+k_j)\cdot\gamma}{\sigma}$$
yields $t<\sigma$ as required, and letting $C$ be a positive constant that can vary line by line we have that,
\begin{align}
\label{eq:padbound}
\Pr\left(X_Q\leq 0\right)&\leq\Pr\left(|X_Q-\Ex(X_Q)|\geq \Ex(X_Q)\right)\notag\\
&\leq\Pr\left(|X_P-\Ex(X_P)|\geq \frac{C\epsilon (n_c-1)(k_c+k_j)\gamma}{\sigma}\cdot\sigma\right)\notag\\
&\leq 2\exp\left\{-C\epsilon^2 n_c(k_c+k_j)\gamma^2\right\}.
\end{align}
Define the equivalence relation ``$\sim$'' on $\PM{}_{n}$ via
$P\sim Q$ if $P_{\cdot,v}=Q_{\cdot,v}$ for all core vertices $v$.  Note that $P\sim Q$ implies that $X_P=X_Q$.
% \begin{itemize}
% \item[i.] 
% $\mathcal{I}_{c,c}(P)=\mathcal{I}_{c,c}(Q)$; $\mathcal{O}_{c,c}(P)=\mathcal{O}_{c,c}(Q)$; 
% \item[ii.] $\mathcal{I}_{j,c}(P)=\mathcal{I}_{j,c}(Q)$; $\mathcal{O}_{j,c}(P)=\mathcal{O}_{j,c}(Q)$;
% \item[iii.]$X_P=X_Q$.
% \end{itemize}
To prove that there is no $P\notin\GM$ satisfying
$X_P\leq0$, it suffices to consider a single $P$ from each equivalence class under ``$\sim$''.
Summing Eq.~\eqref{eq:padbound} over the (at most) $n_c^{2(k_j+k_c)}n_j^{2k_j}$ equivalence classes for each $k_c,k_j$ and then over $k_c$ and $k_j$ yields the desired result.
\end{proof}

%!TEX root=./matched_filter.tex

\begin{proof}[\added{Proof of Theorem~\ref{thm:lowbnd2}}]
\added{For each $i\in[n-n_c]$, define $P_i$ to be the permutation matrix whose associated permutation transposes $n_c+i$ and $n_c$.
Let $X_i=\tr(AP_i BP_i)^T/2$ and let $X=\tr(AB)^T/2$.
It holds that $X < X_i$ if and only if 
$$\sum_{v<n_c} \tilde A_{n_c,v} (\tilde B_{n_c,v}-\tilde B_{n_c+i, v}) < 0,$$
where $\tilde A_{i,j}$ and $\tilde B_{i,j}$ are $1$ (resp., $-1$) depending on the presence (resp., absence) of an edge between $i$ and $j$ in $A$ or $B$.
Let $Z=\sum_{i=1}^{n-m} \mathbf{1}_{\{X<X_i\}}$. 
We will proceed to show that, under the assumptions on the growth rates of $n$ and $n_c$, we have that $\Pr[Z=0]=o(1)$.}

\added{Let the event $\mathcal{E}_{\vec a,\vec b}$ be defined as
For $\vec{a},\vec{b} \in \{0,1\}^{n_c-1}$, let $$\mathcal{E}_{\vec a,\vec b}=\{\tilde A_{n_c,v} = a_v,\,\tilde B_{n_c,v} = b_v\text{ for }v\in [n_c - 1]\},$$ and let }
\begin{align*}
w'=|\{v:a_v=1\}|;  \,w&=|\{v:a_v=b_v=1\}|; \\
y'=|\{v:a_v=-1\}|; \,y&=|\{v:a_v=b_v=-1\}|.
\end{align*}
\added{Conditional on $\mathcal{E}_{\vec a,\vec b}$,
$X$ and $X_i$ then decompose into}
\begin{align*}
X&=w-(w'-w)+y-(y'-y)\\
X_i&=W_i-(w'-W_i)+Y_i-(y'-Y_i),
\end{align*}
\added{where $W_i=\sum_{v:a_v=1}W_{iv}$ and $Y_i=\sum_{v:a_v=-1}Y_{iv}$ are independent random variables defined via
$$W_{iv}\stackrel{ind.}{\sim}\text{Bern}(\Lambda_{n_c+i,v}),\text{ and }Y_{iv}\stackrel{ind.}{\sim}\text{Bern}(1-\Lambda_{n_c+i,v}).$$
This yields that $(X-X_i) | \mathcal{E}_{\vec a,\vec b}=2(w-W_i)+2(y-Y_i)$, and that 
$$\{(X-X_i) | \mathcal{E}_{\vec a,\vec b}\}_{i=1}^{n-n_c}$$ are independent random variables.}

\added{Note that}
\begin{align*}
\mathbf{E}&(\#\,v\in[n_c]\text{ s.t. }\, A_{n_c,v}=1)/ n_c \in \left[\alpha,1-\alpha\right]\\
\mathbf{E}&(\#\,v\in[n_c]\text{ s.t. }\, A_{n_c,v}=-1)/ n_c \in \left[\alpha,1-\alpha\right]\\
\mathbf{E}&(\#\,v\in[n_c]\text{ s.t. }\, A_{n_c,v}=B_{n_c,v}=1)\\
&\hspace{10mm}/ n_c \in \left[\alpha(\alpha+(1-\alpha)\alpha),(1-\alpha)(1-\alpha+\alpha(1-\alpha))\right]\\
\mathbf{E}&(\#\,v\in[n_c]\text{ s.t. }\, A_{n_c,v}=B_{n_c,v}=-1)\\
&\hspace{10mm}/ n_c \in \left[\alpha(\alpha+(1-\alpha)\alpha),(1-\alpha)(1-\alpha+\alpha(1-\alpha))\right]
\end{align*}
\added{Binomial concentration inequalities (for example, Hoeffding's inequality) then yield that for the event $\mathcal{E} = \cup_{\vec{a},\vec{b}}\mathcal{E}_{\vec{a},\vec{b}}$, where the union is over all $\vec{a},\vec{b}$ with }
\begin{align}
\label{eq:alphaalphaalpha}
w'&\in n_c \left[\frac{\alpha}{2},1-\frac{\alpha}{2}\right];\\
w&\in n_c \left[\frac{\alpha}{2}\alpha(2-\alpha),\left(1-\frac{\alpha}{2}\right)(1-\alpha^2)\right]; \\
y'&\in n_c \left[\frac{\alpha}{2},1-\frac{\alpha}{2}\right];\\
\label{eq:alphaalphaalpha4}
y&\in n_c \left[\frac{\alpha}{2}\alpha(2-\alpha),\left(1-\frac{\alpha}{2}\right)(1-\alpha^2)\right],
\end{align}
it holds that $\Pr[\mathcal{E}] = 1-o(1)$.

\added{
Conditioning on $\mathcal{E}_{\vec a,\vec b}$ for $\vec a$ and $\vec b$ satisfying Eq. (\ref{eq:alphaalphaalpha})--(\ref{eq:alphaalphaalpha4}), we have that
$$
W_i\stackrel{stoc.}{\geq}W'_i\sim Bin(n_c\alpha/2,\alpha),\,\,
Y_i\stackrel{stoc.}{\geq}Y'_i\sim Bin(n_c\alpha/2,\alpha).
$$
A standard large deviations bound for the Binomial distribution (see, for example, Theorem 2 of}~\citeapp{arratia1989tutorial})
\added{then yields}
\begin{align*}
\mathbb{P}(X_i>X|\mathcal{E}_{\vec a,\vec b})
&\geq \mathbb{P}(W_i>w,Y_i>y|\mathcal{E}_{\vec a,\vec b})\\
& \geq \mathbb{P}(W_i'>w,Y_i'>y|\mathcal{E}_{\vec a,\vec b})\\
&\geq C\frac{1}{\sqrt{n_c}}e^{-Dn_c}
\end{align*}
\added{for positive constants $C,D$ that are independent of $i$ (indeed, $C$ and $D$ only depend on $\alpha$).
The second moment method yields that }
\begin{align*}
\mathbb{P}(Z=0|\mathcal{E}_{\vec a,\vec b})&\leq \frac{\text{Var}(Z|\mathcal{E}_{\vec a,\vec b}) }{\mathbb{E}(Z|\mathcal{E}_{\vec a,\vec b})^2 }\\
&\leq \frac{n}{\left((n-n_c)C\frac{1}{\sqrt{n_c}}e^{-Dn_c}  \right)^2 }\\
& =\theta\left(\frac{n_ce^{2Dn_c}}{n }\right).
\end{align*}
\added{Therefore, there exists a constant $\xi>0$ such that $n_c <\xi \log n$ implies that
$\mathbb{P}(Z > 0|\mathcal{E}_{\vec a,\vec b})=1-o(1).$
Unconditioning $\mathcal{E}_{\vec a,\vec b}$ yields the desired result.}
\end{proof}

%!TEX root=./matched_filter.tex

\begin{proof}[Proof of Theorem~\ref{thm:oracle}]
The proof for the oracle padding scheme proceeds almost identically to the proof for the centered padding scheme.
In particular, for each $Q$ we let $X_Q$, $\mathcal{E}_Q$, $k_j$ and $k_c$ be defined as in the proof of Theorem~\ref{thm:center}.
Hence, using that $\Ex[\tA_{uv}\tB_{wr}] = \begin{cases}
    R_{uv} \Lambda_{uv} (1 - \Lambda_{uv}) , &\text{ if } \{u, v \} = \{w, r \} \\
    0, &\text{ if } \{u, v \} \neq \{w, r \},
\end{cases}$
we have,
\begin{align*}
\Ex[X_Q] &= \sum_{\{u,v\} \in \mathcal{E}_Q} \Ex[\tA_{uv}\tB_{wr} - \tA_{uv}\tB_{wr}] \\
&= \sum_{\{u,v\} \in \mathcal{E}_Q} R_{uv} \Lambda_{uv} (1 - \Lambda_{uv}) \ge (n_c - 1) (k_c + k_j) \epsilon \gamma.
\end{align*}
The remainder of the proof follows {\it mutatis mutandis}.
\end{proof}

\bibliographystyleapp{IEEEtran}

\bibliographyapp{../biblio.bib}

\end{document}